\theoremstyle{plain}
\newtheorem{theorem}{Theorem}[section]
\newtheorem{corollary}[theorem]{Corollary}
\theoremstyle{definition}
\theoremstyle{remark}
\def\eqref#1{equation~\ref{#1}}
\def\1{\bm{1}}
\def\rvg{{\mathbf{g}}}
\def\rvu{{\mathbf{i}}}
\def\rvm{{\mathbf{m}}}
\def\rvu{{\mathbf{u}}}
\def\rvv{{\mathbf{v}}}
\def\rvw{{\mathbf{w}}}
\def\rvx{{\mathbf{x}}}
\def\va{{\bm{a}}}
\def\vw{{\bm{w}}}
\def\vx{{\bm{x}}}
\def\vy{{\bm{y}}}
\def\mA{{\bm{A}}}
\def\mB{{\bm{B}}}
\def\mI{{\bm{I}}}
\def\mW{{\bm{W}}}
\DeclareMathAlphabet{\mathsfit}{\encodingdefault}{\sfdefault}{m}{sl}
\SetMathAlphabet{\mathsfit}{bold}{\encodingdefault}{\sfdefault}{bx}{n}
\newcommand{\EXP}{\mathop{\mathbb{E}}}
\newcommand{\LN}{\text{LN}}
\newcommand{\gaussian}[2]{\mathcal{N}(#1,\, #2)}
\newcommand{\ourM}{ResiDual}
\title{ResiDual: Transformer with Dual Residual Connections}
\newcommand{\ruc}{$^\ddagger$}
\newcommand{\ms}{$^\dagger$}
\author{
Shufang Xie\ruc\ms, Huishuai Zhang\ms, Junliang Guo\ms, Xu Tan\ms\thanks{Corresponding Authors: Xu Tan, \texttt{xuta@microsoft.com}; Rui Yan, \texttt{ruiyan@ruc.edu.cn}.}, ~ Jiang Bian\ms \\
\textbf{Hany Hassan Awadalla}\ms, \textbf{Arul Menezes}\ms, \textbf{Tao Qin}\ms, \textbf{Rui Yan}\ruc$^{\ast}$ \\
\ms Microsoft Research  \ms Microsoft Azure Translation \\
\ruc Gaoling School of Artificial Intelligence, Renmin University of China \\
\texttt{\{shufangxie,ruiyan\}@ruc.edu.cn}, \\
\texttt{\{huzhang,junliangguo,xuta,jiabia,hanyh,arulm,taoqin\}@microsoft.com}
}
\begin{document}

\maketitle

\begin{abstract}
Transformer networks have become the preferred architecture for many tasks due to their state-of-the-art performance. However, the optimal way to implement residual connections in Transformer, which are essential for effective training, is still debated. Two widely used variants are the Post-Layer Normalization (Post-LN) and Pre-Layer Normalization (Pre-LN) Transformers, which apply layer normalization after each residual block's output or before each residual block's input, respectively.
While both variants enjoy their advantages, they also suffer from severe limitations: Post-LN causes gradient vanishing issue that hinders training deep Transformers, and Pre-LN causes representation collapse issue that limits model capacity.
In this paper, we propose \ourM{}, a novel Transformer architecture with Pre-Post-LN (PPLN), which fuses the connections in Post-LN and Pre-LN together, and inherits their advantages while avoids their limitations.
We conduct both theoretical analyses and empirical experiments to verify the effectiveness of \ourM{}. 
Theoretically, we  prove that \ourM{} has a lower bound on the gradient to avoid the vanishing issue due to the residual connection from Pre-LN. Moreover, \ourM{}  also has diverse model  representations to avoid the collapse issue due to the residual connection from Post-LN. 
Empirically, \ourM{} outperforms both Post-LN and Pre-LN on several machine translation benchmarks across different network depths and data sizes. 
Thanks to the good theoretical and empirical performance, \ourM{} Transformer can serve as a foundation architecture for different AI models (e.g., large language models). Our code is available at \url{https://github.com/microsoft/ResiDual}.

\end{abstract}

\section{Introduction}

Transformer~\citep{vaswani2017attention} has emerged as a powerful neural network architecture that has been successfully applied in various AI tasks, including  machine translation~\citep{vaswani2017attention}, language model
ing and generation~\citep{radford2018improving,radford2019language,brown2020language}, image recognition~\citep{dosovitskiy2010vit}, and speech synthesis~\citep{ren2019fastspeech}.
Despite its success, researchers are still exploring ways to further enhance its performance and deepen the understanding of its inner workings~\citep{wang2019dlcl,katharopoulos2020transformers,fedus2021switch}. 
Among them, one area of ongoing research is the study of residual connections in the Transformer architecture~\citep{liu2020understanding,xiong2020layer,bachlechner2021rezero}.
Two variants of residual connections have been proposed since the introduction of the Transformer, known as Post-LN and Pre-LN.
The Post-LN variant applies layer normalization (LN) operations after the output of each residual block. This variant is used in several prominent models such as BERT~\citep{devlin2018bert}, RoBERTa~\citep{liu2019roberta}, and ALBERT~\citep{lan2019albert}. The Pre-LN variant, on the other hand, applies LN operations before the input to each residual block. This variant is used in models such as the GPT series, ViT~\citep{dosovitskiy2010vit}, and PaLM~\citep{chowdhery2022palm}.

\begin{wraptable}{r}{6.5cm}
%\vspace{-\intextsep-20}
\centering
\begin{tabular}{@{}l>{\centering\arraybackslash}p{15mm}>{\centering\arraybackslash}p{22mm}}\\\toprule  
\textbf{Method}  & \textbf{Gradient Vanishing} & \textbf{Representation Collapse}  \\\midrule
Post-LN &\Sadey[1.4] & \Laughey[1.4]\\  \midrule
Pre-LN &\Laughey[1.4] & \Sadey[1.4]\\  \midrule
\ourM{} &\Laughey[1.4] & \Laughey[1.4]\\  \bottomrule
\end{tabular}
\caption{Comparison of Post-LN, Pre-LN, and our method. \Laughey means the model does not suffers from the issue and \Sadey{} means the model has such issue.}
\label{tab:issue}
\end{wraptable}
Although both variants have been widely used, each one has its own drawbacks, which are summarized in Table~\ref{tab:issue}. 
As shown in Figure~\ref{fig:system}, the key difference between the two residual variants is how the layer normalization (LN) normalized the outputs of each block.
With Post-LN, the output of lower blocks (i.e., the blocks close to input) are normalized multiple times.
As a result, the gradient norm decays exponentially with depth and eventually vanishes in the lower layers~\citep{xiong2020layer}.
This problem does not exist in Pre-LN because the gradient can flow directly to each block. 
However, the Pre-LN architecture has the representation collapse issue~\citep{liu2020understanding}, which will negatively impact the model's capacity. The representation collapse issue refers to the fact that the hidden representation of higher blocks (i.e., the blocks close to output) will be similar to each other in Pre-LN models. Therefore, the higher blocks will have little contribution to the model capacity.

\begin{figure}[!htbp]
    \centering
    \includegraphics[width=0.9\linewidth]{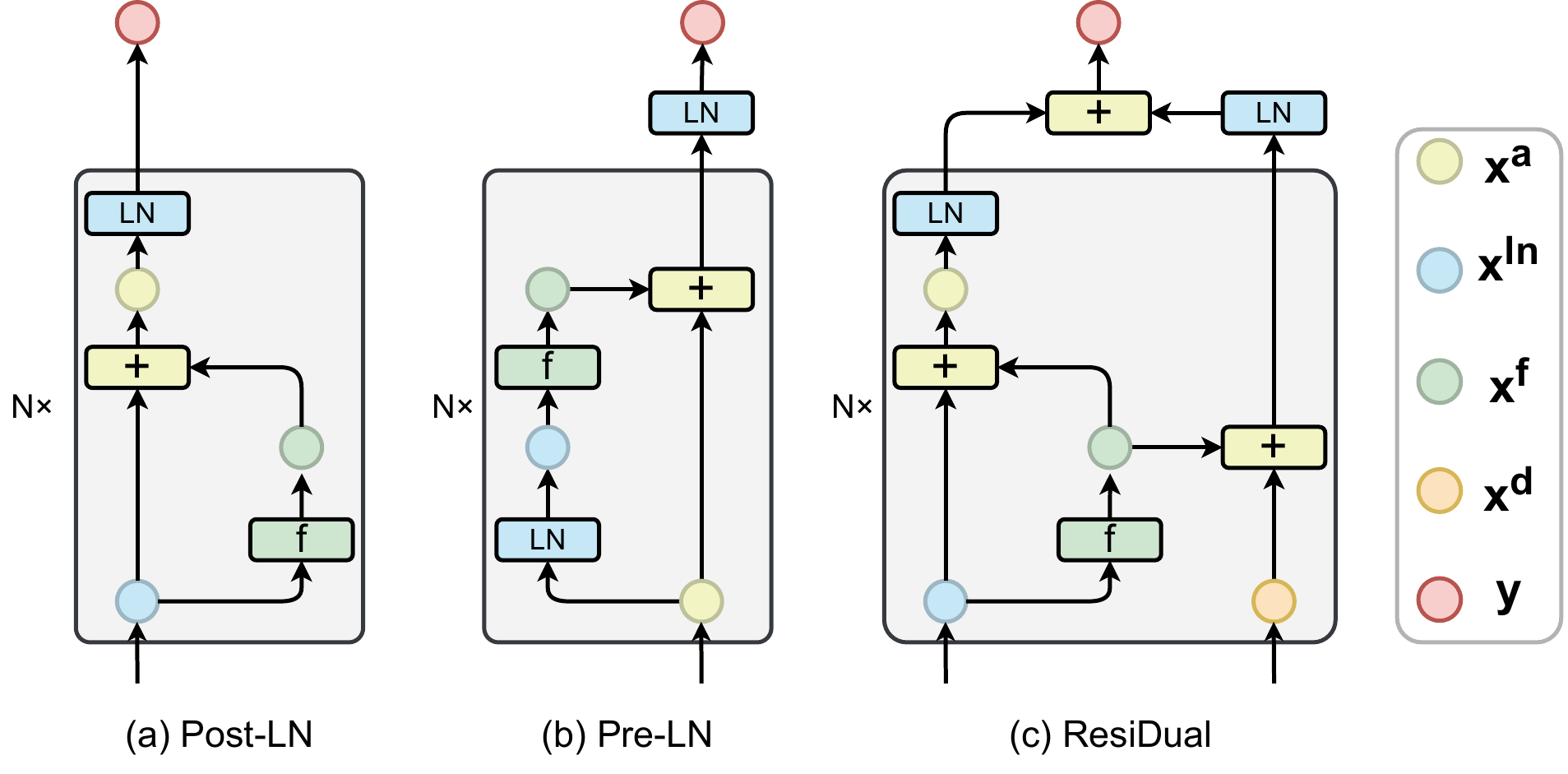}
    \caption{Overview of Post-LN, Pre-LN, and \ourM{}. Circles with different colors represent different variables and rectangles represent different operations. See Section~\ref{sec:method} for more details.}
    \label{fig:system}
\end{figure}

Several approaches have been proposed to address these problems, which can generally be categorized into three categories. Firstly, some methods aim to modify the architecture, such as DLCL~\citep{wang2019dlcl}, which adds aggregations from previous layers, and B2T~\citep{takase2022b2t}, which adds an extra path after every two layers. Secondly, some methods add different weights to the residual, such as Admin~\citep{liu2020understanding}, DeepNet~\citep{wang2022deepnet}, $\tau$-ResNet~\citep{zhang2022stabilize} and ReZero~\citep{bachlechner2021rezero}. Lastly, some methods use better initialization, such as T-Fixup~\citep{huang2020tfixup} and DeepNet~\citep{wang2022deepnet}, to reduce variance and stabilize training.

In this study, we focus on the first category and propose a new architecture for Transformer models to address the drawbacks of both variants while retaining their benefits. Figure~\ref{fig:system}(c) provides an overview of our method. Our design goal is to maintain the advantages of both variants and avoid their disadvantages by employing two residual connections.
In particular, our \ourM{} model utilizes the Pre-Post-LN (PPLN) that consists two residuals: one is similar to the Pre-LN to prevent the gradient vanishing issue, while the other one akin to the Post-LN, which sustains representation diversity to avoid the representation collapse issue.

To validate the effectiveness of our proposed method, we conduct both
 theoretical analysis (Section~\ref{sec:math}) and empirical study (Section~\ref{sec:exp}) to show that our method can achieve the best of both worlds.
From the theoretical perspective, we first show that the gradient vanishing is still a critical problem even using Adam~\citep{kingma2014adam} optimizer.
We also show that \ourM{} has a bounded gradient-norm thus do not have such an issue.
Furthermore, we study the representation collapse issue and show that \ourM{} has the same hidden representation diversity as Post-LN.
Therefore, \ourM{} do not have the representation collapse issue in Pre-LN.

Empirically, we conduct comprehensive experiments on machine translation tasks, which are among the most representative tasks in natural language processing. Our dataset comprises small-scale (IWLST), mid-scale (WMT), and large-scale (OPUS) datasets.
Our experimental results demonstrate that our method outperforms baselines across all three datasets.

In summary, this work makes the following contributions:
\begin{itemize}

\item We present \ourM{}, a simple yet potent variation of the Transformer architecture, which tackles both the gradient vanishing problem in Post-LN and the representation collapse issue in Pre-LN Transformer models.

\item Our theoretical analysis demonstrates that this new design can leverage the strengths of both variants while avoiding their weaknesses.

\item Our experimental results provide further evidence of the effectiveness of our approach, as it achieves superior performance compared to both the Post-LN and Pre-LN Transformer models across multiple datasets.

\end{itemize}

\if0
\section{Related Work}

\paragraph{Improving Transformer Residual Connection}
Several works have proposed different solutions to improve the performance of Transformer networks. For the normalization and residual perspective, three major solutions have been proposed. Firstly, some works modify the architecture, such as DLCL~\citep{wang2019dlcl} and B2T~\citep{takase2022b2t}. DLCL adds residuals from all previous layers while B2T adds an extra path after every two layers. Secondly, some works add a weight to the residual, such as Admin~\citep{liu2020understanding}, DeepNet~\citep{wang2022deepnet}, and ReZero~\citep{bachlechner2021rezero}.. Lastly, some works change the initialization, such as T-Fixup~\citep{huang2020tfixup} and DeepNet~\citep{wang2022deepnet}, to reduce variance and stabilize training. In this work, we propose a new simple and effective method to tackle this problem, with both empirical and theoretical verification.

\paragraph{Theoretical Analysis of Transformers}
Understanding the Transformer network theoretically is an important research direction. ~\citet{xiong2020layer} first revealed the imbalanced gradient distribution in the Post-LN variant, while ~\citet{liu2020understanding} noticed the representation bottleneck of Pre-LN Transformers. More recently, ~\citet{wang2022deepnet} studied Transformer optimization from the model update prospective. In comparison to these works, we investigate the Transformer optimization with the Adam~\citep{kingma2014adam} optimizer, which is widely used than SGD. We also show quantitative analysis of its convergence rate of representation collapse. We hope our study will contribute to future research on this problem.
\fi

\section{Method}\label{sec:method}

\NewDocumentCommand{\colorvar}{ m m m m }{{\color{#1}#2^{#3}_{#4}}}
\NewDocumentCommand{\unx}{O{} O{black}}{\colorvar{#2}{\vx}{a}{#1}}
\NewDocumentCommand{\nx}{O{} O{black}}{\colorvar{#2}{\vx}{ln}{#1}}
\NewDocumentCommand{\fx}{O{} O{black}}{\colorvar{#2}{\vx}{f}{#1}}
\NewDocumentCommand{\fp}{O{} O{black}}{\colorvar{#2}{\vw}{}{#1}}
\NewDocumentCommand{\my}{O{} O{black}}{\colorvar{#2}{\vy}{}{#1}}
\NewDocumentCommand{\xd}{O{} O{black}}{\colorvar{#2}{\vx}{d}{#1}}
\newcommand{\dx}{\rvx^{d}}

\definecolor{unx_color}{HTML}{000000}%{D9C022}
\definecolor{nx_color}{HTML}{000000}%{6C8EBF}
\definecolor{fx_color}{HTML}{000000}%{82B366}
\definecolor{y_color}{HTML}{000000}%{B85450}
\definecolor{xd_color}{HTML}{000000}%{D6B656}

\newcommand{\cunxk}{\unx[k][unx_color]}
\newcommand{\cnxk}{\nx[k][nx_color]}
\newcommand{\cfxk}{\fx[k][fx_color]}
\newcommand{\cmy}{\my[][y_color]}

\subsection{Disadvantages of Post-LN and Pre-LN}
In this section, we briefly review the architecture of Post-LN and Pre-LN, whose illustrations are available in Figure~\ref{fig:system} (a) and (b).
We will also discuss the shortcomings of each architecture.

\paragraph{Gradient Vanish of Post-LN}
This Post-LN architecture is shown in Figure~\ref{fig:system} (a).
%Here we use $N$ denote the number of residual blocks and
To be more specific, 
given a Post-LN Transformer network with $N$ residual blocks, we assume the input shape is $n \times d$ where the $n, d$ denotes the sequence length and embedding size\footnote{We omit the batch dimension that will not affect our analysis.}.
The variables with vector arrow (e.g., $\overrightarrow{\vx} \in \mathbb{R}^{n\times d}$) denote the whole sequence and the variables without it (e.g., $\vx \in \mathbb{R}^{d}$) denote an element of the sequence.
We use $\overrightarrow{\vx}^{a} \in \mathbb{R}^{n\times d}$ denote the tensor after \textbf{a}ddition operation and use subscript $k$ (i.e. $\overrightarrow{\vx}_{k}^{a}$) denote the tensor in the $k$-th block. 
We also use $\overrightarrow{\vx}_k^{ln} \in \mathbb{R}^{n\times d}$ denotes the normalized tensor and $\overrightarrow{\vx}_k^f \in \mathbb{R}^{n\times d}$ denotes the output of the function $f_k(\cdot; \fp_k)$ in the $k$-th block. The $f_k$ can be a self-attention, cross-attention, or feed-forward with parameter $\fp_k$.
Using these notations, the Post-LN computation of each element in the $k$-th block is
\begin{align*}
    \cunxk=\cnxk + \cfxk = \cnxk + f_k(\overrightarrow{\vx}_k^{ln}; \fp_k); \quad
    \nx[k+1][nx_color]=\LN(\cunxk)
\end{align*}
Finally, the output $\cmy$ is computed by $\cmy=\nx[N+1][nx_color]=\LN(\unx[N][unx_color])$.
Intuitively, the $\cfxk$ is normalized $N-k$ times, so does the gradients of $\fp_k$.
Therefore, the gradients of lower blocks will be small.
From~\citet{xiong2020layer}, we know that for Post-LN Transformer, the gradient norm decreases exponentially from deep layers to shallow layers.
Intuitively, such an imbalanced gradients will impede the model training.
Therefore, in practise, training tricks such as learning-rate warm-up are necessary to train a Post-LN model.

\paragraph{Representation Collapse of Pre-LN}
With the same notations, the Pre-LN computation is
\begin{align*}
    \cnxk = \LN(\cunxk); \quad \unx[k+1][unx_color] = \cunxk + \cfxk = \cunxk + f_k(\overrightarrow{\vx}_k^{ln}; \fp_k)
\end{align*}
Similarly, the model output is $\cmy=\LN(\unx[N+1][unx_color])=\LN(\sum_{k=1}^{N}\cfxk)$.
Intuitively, as the $\cfxk$ is only normalized once when computing the $\cmy$, neither the forward nor the backward pass are blocked by LN.
Thus, Pre-LN do not have the gradient vanish issue.
However, it has another issue called  representation collapse.
More specifically, ~\citet{liu2020understanding} show that the $\frac{\sqrt{ \text{Var}[\cfxk]  }}{\sqrt{\text{Var}[\cunxk + \cfxk]}}$ is likely to be smaller for higher blocks (i.e, blocks larger $k$).
This means the output of the later blocks ($\cfxk$)  has little contribution to the total variance of $\cunxk$.
In Section~\ref{sec:rep_cola}, we show that  the difference between $\nx[k+1][nx_color]$ and $\nx[k][nx_color]$ (i.e., $\vert \nx[k+1][nx_color]-\nx[k][nx_color] \vert$)  decays along with $k$, which indicates the input and output of the higher blocks will collapse to similar values.
We also show that this issue may limit the capacity of the model.

\subsection{\ourM{}}

The goal of our model is to take the advantages of both variants and avoid the both disadvantages.
To achieve this goal, we use residuals from both variants and the overview of our method is in Figure~\ref{fig:system}~(c).
More specifically, the two residual connections are illustrated in the left and right vertical lines in the Figure.
The left one, which is similar to the conventional Post-LN, is
\begin{align*}
    \cunxk=\cnxk + \cfxk = \cnxk + f_k(\overrightarrow{\vx}^{ln}_k; \fp_k); \quad \nx[k+1][nx_color]=\LN(\cunxk)
\end{align*}
Meanwhile, the right residual, which is similar to the conventional Pre-LN, is formulated by
\begin{align*}
    \xd[k+1][xd_color]=\xd[k][xd_color] + \cfxk,
\end{align*}
where $\xd[][xd_color] \in \mathbb{R}^{n\times d}$ is the tensor for \textbf{d}ual residual that similar to $\unx$ in the Pre-LN that allows the gradients directly flow to each block.

Finally, the output $y$ is computed by adding the representation of both residuals, which is
\begin{align*}
    \cmy = \nx[N+1][nx_color] + \LN\left(\xd[N+1][xd_color]\right).
\end{align*}

\subsection{Discussion}
In this section, we will only introduce the intuitive understanding of \ourM{} and the mathematical analysis is provided in Section~\ref{sec:math}.

\paragraph{Avoiding the Gradient Vanishing}
In~\ourM{}, gradient of each block flows from both residual connections. Thus, even if the gradient comes from the Post-LN-like residual vanishes, there will still be gradients from the Pre-LN-like residual. This prevents the gradient vanishing issue. We provide the details of the lower-bound of the gradient norm in Section~\ref{sec:grad_imba}.

\paragraph{Avoiding the Representation Collapse}
Our Pre-LN-like residual only affects the model output and does not affect the input to each block. Therefore, the representation capacity is the same as a Post-LN model.
Furthermore, because the final output of our model is the sum of two residual connections, the representation of the output will not collapse either.
We provide the details of the lower-bound of the representation capacity in Section~\ref{sec:rep_cola}.

\section{Theoretical Analysis of \ourM{}} \label{sec:math}
In this section, we formally study the gradient vanishing and representation collapse issue.
We also prove that our method does not have such issues.

\subsection{The Gradient Vanishing Issue} \label{sec:grad_imba}

In order to present the analysis in a concise way, we study a simple setting and make several assumptions. In Transformer, the $f$ function can be either a  feed-forward block or a multi-head attention block. For a feed-forward block, $f(\vx) := \mW \vx$ where we ignore the layer index. For a multi-head attention block, we have weight matrices $\mW_Q, \mW_K, \mW_{V}$. For simplicity, we focus on single-head attention. Similar to \cite{xiong2020layer}, we initialize  $\mW_Q$ to be zero matrices and consequently, the attention is a uniform distribution at initialization and $f(\vx^{(i)}) :=  \frac{1}{n}\sum_{j=1}^{n}\vx^{(j)} \mW_{V}$ where we drop the layer index and $\vx^{(j)}, j\in [n]$ are the input sequence with length $n$. We usually drop the superscript index $^{(j)}$ for notation simplicity when the context is clear itself. We introduce $\overrightarrow{\vx} := \{\vx^{(j)}, j\in [n]\}$ and use $\vw$ to denote the collection of parameter matrices in $f$.%$\vw$ to denote the parameter matrices in $f$.

Based on above assumption, without loss of generality, we further assume that the $f$ function keeps the norm, i.e., $\|f(\vx)\| = \|\vx\|$. This assumption is asymptotically true when the network width goes to infinity and the initialization variance is properly scaled. We assume that the signal is standardized after layer normalization, i.e., $\|\vx_{k}^{ln}\|=\sqrt{d}$  for all  $k\in [N]$, and that for $\vx\in \mathbb{R}^d$, the Jacobian matrix through LN satisfies $\frac{\partial\text{LN}(\vx)}{\partial \vx}\approx \frac{\sqrt{d}}{\|\vx\|_2} \mI$. This approximation can be achieved if the mean of $\vx$ is 0 and the variance is $\frac{1}{d} \|\vx\|^2$ while ignoring the gradient back-propagated through mean and variance.  The rationale in this assumption is that the error signal (gradients) back-propagating through LN becomes smaller as the norm of the input to the LN gets larger. In the Post-LN Transformer, the scale of the inputs to the layer normalization is independent of $N$, and thus the gradients of parameters in the last layer are independent of $N$. %We note that similar assumptions have been made in \cite{xiong2020layer}.

\paragraph{Gradient Norm Estimation for Post and Pre-LN Transformer.}

From~\citet{xiong2020layer}, we know that for Post-LN Transformer, the gradient norm of the block $k$  decreases  exponentially as block index $k$ gets smaller. This indicates that the gradient of the block close to input would be exponentially small for deep transformers. In contrast, for Pre-LN Transformer, the gradient norm of each block is roughly independent with the block index $k$.

For completeness, we rephrase the result from \cite{xiong2020layer}  with our notations and assumptions. We also present the proof in a more accurate way in Appendix. 

\begin{theorem} [Gradients of the $k$-th block in the Post-LN and Pre-LN Transformers] \label{thm:gradient-norm}
Given the above assumptions on $f$ and LN, for the Post-LN Transformer with $N$ blocks, the gradient of the parameters of the $k$-th block satisfies %$$\left\|\frac{\partial {\mathcal{L}}}{\partial \vw_k}\right\|_F \approx \mathcal{O}\left(\left(1/2\right)^{(N-k)/2}e^{\sqrt{N-k}}d\sqrt{\log{d}}\right),$$  
\begin{flalign}
\left\|\frac{\partial {\mathcal{L}}}{\partial \vw_k}\right\|_F \approx \mathcal{O}\left(\left(1/2\right)^{(N-k)/2}e^{\sqrt{N-k}}\right), 
\end{flalign}
for the Pre-LN Transformer with $N$ blocks, the gradient of the parameters of the $k$-th block satisfies 
\begin{flalign}
\left\|\frac{\partial {\mathcal{L}}}{\partial \vw_k}\right\|_F\approx\mathcal{O}\left(\sqrt{\frac{\log(N-k)}{N}}\right),    
\end{flalign}
where we ignore the terms irrelevant with $k, N$.
%$$\left\|\frac{\partial {\mathcal{L}}}{\partial \vw_k}\right\|_F\approx\mathcal{O}\left(d\sqrt{\frac{\log(N-k)\cdot\log{d}}{N}}\right).$$
% and for ResiDual Transformer with $N$ layers,
% $$ \left\|\frac{\partial {\mathcal{L}}}{\partial \vw_k}\right\|_F\approx\mathcal{O}\left(d\sqrt{\frac{\log(N-k)\cdot\log{d}}{N}}\right),$$
\end{theorem}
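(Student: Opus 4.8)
The plan is to compute $\partial\mathcal{L}/\partial\vw_k$ for each architecture by the chain rule, writing it as the Jacobian of $f_k$ with respect to its own parameters (evaluated at the normalized input $\vx_k^{ln}$) composed with the ``downstream'' sensitivity $\partial\mathcal{L}/\partial\vx_k^a$, which is obtained by back-propagating from the loss through blocks $N, N-1, \dots, k$. The first factor has Frobenius norm of order $\|\vx_k^{ln}\| = \sqrt d$ by the standardization assumption — for the feed-forward case $\partial f_k/\partial\mW_k$ is literally built from $\vx_k^{ln}$, and for uniform attention the averaging only changes the answer by an $n$-dependent constant — so, up to $k,N$-independent factors, the problem reduces to estimating $\|\partial\mathcal{L}/\partial\vx_k^a\|$. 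Both estimates rest on two ingredients: (i) a norm estimate for the forward signal $\|\vx_k^a\|$, which controls the LN Jacobian through the assumed formula $\partial\LN(\vx)/\partial\vx\approx(\sqrt d/\|\vx\|)\mI$; and (ii) control of the product of the per-block Jacobians $I+\partial f_j$ encountered along the backward path.

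For the Post-LN tower, every block re-normalizes its input, so $\|\vx_k^{ln}\|=\sqrt d$ for all $k$, and since $f_k$ is norm-preserving and its output is approximately orthogonal to $\vx_k^{ln}$ at initialization, $\|\vx_k^a\|^2\approx\|\vx_k^{ln}\|^2+\|f_k(\vx_k^{ln})\|^2=2d$. Hence each of the $N-k$ layer-normalizations crossed on the way from the output down to block $k$ contributes a Jacobian $\approx(\sqrt d/\sqrt{2d})\mI=(1/\sqrt2)\mI$, which already produces the factor $(1/2)^{(N-k)/2}$. The remaining factor comes from the residual-branch Jacobians $I+\partial f_j$: their operator norms are not exactly $\sqrt 2$ but fluctuate around it, and accumulating the first-order corrections over $N-k$ layers — this is the step where we are more careful than the heuristic in Xiong et al. — yields the sub-exponential $e^{\sqrt{N-k}}$ correction. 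Multiplying the two contributions and dropping $d$, $n$ and $\|\partial\mathcal{L}/\partial\vy\|$ gives the claimed bound.

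For the Pre-LN tower the forward signal is never renormalized, so $\vx_{N+1}^a=\vx_1^a+\sum_{j=1}^N f_j(\vx_j^{ln})$ is a sum of approximately mutually orthogonal vectors, each of norm $\sqrt d$, giving $\|\vx_j^a\|\approx\sqrt{jd}$ and in particular $\|\vx_{N+1}^a\|\approx\sqrt{Nd}$. The single output layer-normalization therefore contributes $\approx(1/\sqrt N)\mI$ — this is the source of the $1/\sqrt N$ in the bound — while the per-block Jacobians are $I+(\sqrt d/\|\vx_j^a\|)\partial f_j\approx I+j^{-1/2}\partial f_j$, i.e.\ identity plus a vanishing correction. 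Back-propagating $\partial\mathcal{L}/\partial\vy$ through these $N-k$ near-identity maps and tracking the per-layer corrections $\sum_{j=k}^{N}j^{-1}\|\partial f_j^\top\vg\|^2$ along the residual path contributes a harmonic-type sum that produces the $\log(N-k)$ factor under the square root; combining it with the $1/\sqrt N$ from the output LN and the $\sqrt d$ from $\partial f_k/\partial\vw_k$, and discarding the $k,N$-free constants, gives $\mathcal{O}(\sqrt{\log(N-k)/N})$.

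The main obstacle is that every ``$\approx$'' above conceals a concentration argument: the norm estimates $\|\vx_k^a\|^2\approx 2d$ (Post-LN) and $\approx kd$ (Pre-LN) require the outputs $f_j(\vx_j^{ln})$ of distinct blocks, and $f_j(\vx_j^{ln})$ versus $\vx_j^{ln}$ itself, to be nearly orthogonal at initialization, which in turn relies on the Gaussian-like initialization of $\mW_V,\mW$ and on $d$ (and $n$) being large; and the product-of-Jacobians estimates — especially the $e^{\sqrt{N-k}}$ term for Post-LN — need the fluctuations of $\|I+\partial f_j\|$ to be controlled uniformly in $j$. A secondary subtlety is the mismatch between the per-token map used for attention, $f(\vx^{(i)})=\frac1n\sum_j\vx^{(j)}\mW_V$ with per-token Jacobian $\frac1n\mW_V^\top$, and the whole-sequence norm-preservation assumption; I would reconcile this exactly as in Xiong et al., working at the level of the sequence tensor and absorbing the $n$-factors into the $k,N$-independent constants.
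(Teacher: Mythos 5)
Your proposal is correct and follows essentially the same route as the paper's proof: chain rule to isolate $\partial\mathcal{L}/\partial\vx^a_k$, the per-layer LN Jacobian contraction $\sqrt{d}/\|\vx^a_l\|\approx 1/\sqrt{2}$ (Post-LN) versus $\|\vx^a_l\|\approx\sqrt{ld}$ and the final-LN factor $1/\sqrt{N}$ (Pre-LN), and a product of residual Jacobians $\mI+\partial f_l$ supplying the $e^{\sqrt{N-k}}$ and logarithmic corrections. The only differences are presentational: the paper collapses the sequence dimension explicitly via block-circulant matrices and obtains the two product-of-Jacobians estimates by citing the random-matrix results of Zhang et al.\ (2022), whereas you sketch the same fluctuation-accumulation/harmonic-sum heuristics directly and correctly flag the concentration step as the part needing care.
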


\paragraph{Analysis of Adam}
In practice, Adam optimizer is widely used to train Transformer networks.
Therefore, it is critical to understand why the vanished gradients issue cannot be solved even when the gradients are normalized by Adam.
Here we show that the Adam updates is \textit{ill-conditioned} in vanished gradients.
% More specifically, with an Adam optimizer, the parameter update formula is 
% \begin{align}
%     \rvw_t \gets \rvw_{t-1} - \alpha \cdot \hat{\rvm}_t/(\sqrt{\hat{\rvv}_t}+\epsilon) \label{eq:adaptive} \\
%     \hat{\rvm}_t \gets \rvm_t/(1-\beta_1^t), \quad \hat{\rvv}_t \gets \rvv_t/(1-\beta_2^t) \\
%     \rvm_t \gets \beta_1\cdot \rvm_{t-1} + (1-\beta_1)\cdot \rvg_t \\
%     \rvv_t \gets \beta_2\cdot \rvv_{t-1} + (1-\beta_2)\cdot \rvg_t^2,
% \end{align}
More specifically, let the $\alpha, t, \epsilon, \beta_1, \beta_2$ denote the learning rate, step, smoothing factor, first decay rate and second decay rate, respectively, and the $\fp^{(t)}, \rvg, \hat{\rvm}^{(t)}, \hat{\rvv}^{(t)}$ denote the parameters, gradients, bias-corrected first and second moment estimation at time t.
Meanwhile,  we use $\rvu(\rvg^{(t)}) = \alpha \cdot \hat{\rvm^{(t)}}/(\sqrt{\hat{\rvv^{(t)}}}+\epsilon)$ denote the Adam update (i.e., $\rvw^{(t)} \gets \rvw^{(t-1)} - \rvu(\rvg^{(t)})$) and the full formula is in Appendix~\ref{app:adam}.
Because the Adam update is element-wise, we also use $u(g)$ to denote the scalar function of $\rvu(\rvg)$, which means $\rvu(\rvg) = [u(g_{1}), u(g_{2}), \cdots, u(g_{d})]$.
Then, we will show that, when the gradients are vanished, the $\rvu(\rvg)$ is sensitive to small perturbation (i.e., ill-conditioned) because of its large condition number.

\begin{theorem}
    The Adam update $\rvu(\rvg) = \alpha \cdot \hat{\rvm}^{(t)}/(\sqrt{\hat{\rvv}^{(t)}}+\epsilon)$ is ill-conditioned when $\rvg = 0$ in early stage.
\end{theorem}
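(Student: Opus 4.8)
The plan is to reduce this (deliberately informal) statement to an explicit estimate on the scalar update map $u$, and then read off a large condition number from exactly the imbalanced-gradient regime that Theorem~\ref{thm:gradient-norm} says Post-LN produces. First I would pin down what ``early stage'' means: take the standard zero initialisation $m^{(0)}=v^{(0)}=0$ and evaluate the Adam recursion of Appendix~\ref{app:adam} at a small step $t$ (cleanest at $t=1$, or with a constant gradient history). In this regime the bias corrections cancel exactly, $\hat m^{(t)}=g$ and $\hat v^{(t)}=g^2$ coordinate-wise, so the per-coordinate update collapses to the ``soft-sign'' map
\[
u(g) \;=\; \alpha\,\frac{g}{\sqrt{g^2}+\epsilon} \;=\; \alpha\,\frac{g}{|g|+\epsilon},
\]
whose only intrinsic scale is the smoothing constant $\epsilon$. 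This is precisely where ``early stage'' is used: once $\hat v^{(t)}$ has absorbed earlier non-vanished gradients it is no longer at the $\epsilon^2$ scale and the argument below would not apply — which is fine, since the claim is only about the early, $\hat v$-not-yet-grown regime.

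Next I would differentiate: for $g\neq 0$, $u'(g) = \alpha\epsilon/(|g|+\epsilon)^2$, and the two one-sided derivatives at $0$ coincide, $u'(0)=\alpha/\epsilon$. Hence $u'$ is \emph{largest} precisely at the vanished point $g=0$, with value $\alpha/\epsilon$, and decays like $\alpha\epsilon/g^2$ once $|g|\gg\epsilon$; across its domain the slope of $u$ ranges over roughly $[\alpha\epsilon,\ \alpha/\epsilon]$, a span of order $1/\epsilon^2$. The Jacobian of the vector map $\rvg\mapsto\rvu(\rvg)$ is the diagonal matrix $\mathrm{diag}(u'(g_1),\dots,u'(g_d))$, whose condition number is $\kappa=\max_i|u'(g_i)|/\min_i|u'(g_i)|$. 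By Theorem~\ref{thm:gradient-norm} the Post-LN gradient is vanishing in the shallow-layer coordinates while remaining $\Theta(1)$ in the deep-layer ones; taking $g_i\approx 0$ for a shallow coordinate and $|g_j|=c=\Theta(1)$ for a deep one gives
\[
\kappa \;\ge\; \frac{u'(0)}{u'(c)} \;=\; \frac{(c+\epsilon)^2}{\epsilon^2} \;=\; \Bigl(1+\tfrac{c}{\epsilon}\Bigr)^{2},
\]
which is of order $10^{16}$ for the usual $\epsilon\!\sim\!10^{-8}$.

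I would then spell out why this is what ``ill-conditioned'' should mean here: a perturbation $\delta\rvg$ of the gradient (minibatch noise, finite precision) moves the update by $\approx \mathrm{diag}(u'(g_i))\,\delta\rvg$, so its effect on the vanished coordinates is $\kappa$ times larger than on the healthy ones, and the update direction ends up dominated by noise in exactly the layers whose true gradient is smallest. If one prefers to avoid the kink of $u$ at $0$, the same conclusion follows from the difference quotient $(u(\delta)-u(0))/\delta = \alpha/(|\delta|+\epsilon)\to\alpha/\epsilon$, compared against the analogous quotient at a non-vanished point. One should state the result for a gradient vector that is vanished in \emph{some} coordinates and $\Theta(1)$ in others: if \emph{every} coordinate were exactly zero, the diagonal Jacobian would be $(\alpha/\epsilon)\mI$ and formally well-conditioned, so the pathology is the coexistence of vanished and healthy coordinates — which is exactly what Post-LN creates.

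The hard part is not computational — the differentiation and the bound are one-liners — but the formalisation of an informal statement: committing to ``early stage'' $=$ the zero-initialised small-$t$ (equivalently, $\hat v$-unpolluted) regime, and ``ill-conditioned'' $=$ a $\gtrsim 1/\epsilon^2$ lower bound on $\max_i|u'(g_i)|/\min_i|u'(g_i)|$, and then verifying that this regime is precisely the one forced by the Post-LN gradient imbalance of Theorem~\ref{thm:gradient-norm}. A secondary subtlety to handle cleanly is the non-smoothness of $u$ at $g=0$ and, if one wants the statement for general small $t$ rather than $t=1$, tracking the $\beta_1,\beta_2$-dependent constants $a=(1-\beta_1)/(1-\beta_1^t)$ and $b=\sqrt{(1-\beta_2)/(1-\beta_2^t)}$ in $u(g)=\alpha a g/(b|g|+\epsilon)$ and noting $u'(0)=\alpha a/\epsilon$ is still $\Omega(\alpha/\epsilon)$.
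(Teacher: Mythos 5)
Your early-stage reduction and the per-coordinate derivative are consistent with the paper's computation (the paper likewise holds $m^{(t-1)},v^{(t-1)}$ fixed, sets $v^{(t-1)}\approx 0$ and $\tfrac{1-\beta_1}{1-\beta_1^t}\approx 1$, and obtains $\partial u/\partial g\big|_{g=0}\approx \alpha/\epsilon$). The genuine gap is in what you take ``ill-conditioned'' to mean. The paper's proof uses the \emph{absolute} condition number of the update map at $\rvg$, namely $\hat{\kappa}=\lim_{\delta\to 0}\sup_{\|\delta\rvg\|\le\delta}\|\rvu(\rvg+\delta\rvg)-\rvu(\rvg)\|/\|\delta\rvg\|=\|\mathbf{J}(\rvg)\|=\sqrt{\sum_{i=1}^d(\partial u/\partial g_i)^2}$, and shows that at $\rvg=0$ in the early stage this equals $\approx \alpha\sqrt{d}/\epsilon$ (e.g.\ $3200$ for $d=1024$, $\epsilon=10^{-6}$, $\alpha=10^{-4}$): a tiny gradient perturbation is amplified thousands of times in the update, which is the instability being asserted. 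You instead define conditioning as the spread $\max_i|u'(g_i)|/\min_i|u'(g_i)|$ of the diagonal Jacobian, and under that definition you explicitly conclude that $\rvg=0$ gives $(\alpha/\epsilon)\mI$ and is ``formally well-conditioned,'' so you re-state the claim for gradients that are vanished in some coordinates and $\Theta(1)$ in others, importing Theorem~\ref{thm:gradient-norm} to supply such a mixture. That proves a related but different statement, and it contradicts the theorem as written (which asserts ill-conditioning precisely when $\rvg=0$); no appeal to the Post-LN gradient imbalance is needed in the paper's argument at all.

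The fix is small: keep your derivative $u'(0)\approx\alpha/\epsilon$, but aggregate it over coordinates into the Jacobian norm $\hat{\kappa}\approx\alpha\sqrt{d}/\epsilon$ and compare it to $1$ (amplification of $\|\delta\rvg\|$ into $\|\rvu(\rvg+\delta\rvg)-\rvu(\rvg)\|$), rather than comparing coordinates against each other. Your observation about the relative spread of sensitivities when vanished and healthy coordinates coexist is a sensible complementary point, and your $(1+c/\epsilon)^2$ bound is fine as far as it goes, but it cannot substitute for the stated claim; also note the paper's numerics use $\epsilon=10^{-6}$ and measure badness against the warm-up remedy (reducing $\alpha$ so that $\hat{\kappa}\approx 1$), which only makes sense for the absolute notion.
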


\begin{proof}

The absolute condition number $\hat{\kappa}$ for the parameter update $\rvu(\rvg_t)$ is

\begin{align*}
    \hat{\kappa} &= \lim_{\delta \to 0} \sup_{||\delta \rvg||\leq\delta} \frac{||\rvu(\rvg+\delta \rvg) - \rvu(\rvg)||}{||\delta \rvg||} \\
    &= ||\textbf{J}(\textbf{g})|| \quad \text{(because }u(g) \text{\,is differentiable)} \\
    &= \sqrt{ \sum_{i=1}^d \left( \frac{\partial u}{\partial g_i} \right)^2} \quad \text{(because }\textbf{J}(\textbf{g}) \text{\,is diagonal.)}
    % &= \sqrt{d} \cdot | u'(g_t)| \quad \text{(because }u(g_t) \text{\,is element-wise)}
\end{align*}

The full expression of $ \frac{\partial u}{\partial g}$ can be found in Appendix~\ref{app:adam}.
In the early stage, (i.e., $t$ is small), the $\frac{1-\beta_1}{1-\beta_t^t} \approx 1, v^{(t-1)}_{i} \approx 0$.
Therefore, when the gradient $g_{t,i} = 0$, the absolute condition number $\hat{\kappa}$ is 
\begin{align}\label{eq:kappa}
    \hat{\kappa} &=  \alpha \frac{1-\beta_1}{1-\beta_1^t} \sqrt{\sum_{i=1}^d \frac{1}{\epsilon +\sqrt{\frac{\beta_{2} v^{(t-1)}_{i}}{1 - \beta_{2}^{t}}} }} \approx  \frac{ \alpha \sqrt{d}}{\epsilon} 
\end{align}

For example, in a classic setting where $d=1024, \epsilon=10^{-6}, \alpha=10^{-4}$, we have $\hat{\kappa} = 3200 $, which is a very large number.
This tells us that in early stage, the $\textbf{u}(\textbf{g}_t)$ is ill-conditioned.

\end{proof}

Intuitively, when there is a small noise $||\delta \rvg||\leq\delta$ added to the gradient $\rvg$, the change of the update $||\rvu(\rvg+\delta \rvg) - \rvu(\rvg)||$ could be thousand times larger than $||\delta \rvg||$.
This will make the training unstable and vulnerable to a small perturbation.
This study is also consistent with the empirically findings by~\citet{wang2022deepnet} that the exploding gradients in higher layers is not the root cause of Post-LN training difficultly.
Further more, to verify our approximation, we also have simulation in 
Appendix~\ref{app:adam}.

More over, from Equation~(\ref{eq:kappa}), given a fixed model with width $d$, seems there are two possible way to reduce the $\hat{\kappa}$: increasing the $\epsilon$ or decreasing the $\alpha$.
However, the first one is not viable because a large $\epsilon$ will make an adaptive optimizer less adaptive because a larger $\epsilon$ will make the value of normalization factor $\sqrt{\hat{\rvv}^{(t)}}+\epsilon$ depends more on the smooth factor $\epsilon$ rather than the value of $\sqrt{\hat{\rvv}^{(t)}}$.
In practise, the second solution, which reduce the learning-rate $\alpha$ by learning-rate warm-up, is more widely adopted.
With a linear learning-rate warm-up for a few thousand steps (e.g., 4000), it is easy to see that the $\hat{\kappa}$ will be approximately $1$ in the begin of training, making the training stable.

\subsection{The Representation Collapse Issue} \label{sec:rep_cola}

\paragraph{The Representation Collapse in Pre-LN}

The issue with the representation capability of Pre-LN was initially observed by~\citet{liu2020understanding}. In summary, the Pre-LN Transformer's hidden representation cannot be refined by deeper layers due to the normalization of layer outputs. In this work, we propose a novel analysis approach that directly examines the distribution of hidden state changes, represented by $|\nx[k+1]-\nx[k]|$, and output changes, denoted by $|\vy_{N} - \vy_{N-1}|$. Our new method offers a straightforward way to obtain quantitative results regarding the convergence rate.

\begin{theorem}
For Pre-LN, assume $\fx[k] \sim \gaussian{0}{\sigma^2\mI}$ independently for all $k\in[N]$, we have $\nx[k+1]-\nx[k] \sim \gaussian{0}{\omega_k^2}\mI$ where $\omega_k^2 = \frac{2}{\sqrt{k}(\sqrt{k-1}+\sqrt{k})}$.
\end{theorem}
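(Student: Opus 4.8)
The plan is to unroll the Pre-LN recursion, use the wide-network assumptions from Section~\ref{sec:grad_imba} to turn each layer-normalized hidden state into an explicit Gaussian, and then read off the law of the increment. Unrolling $\unx[k+1]=\unx[k]+\fx[k]$ from the embedding onward (consistent with the identity $\unx[N+1]=\sum_{k=1}^{N}\fx[k]$ used earlier) gives $\unx[k]=\sum_{j=1}^{k-1}\fx[j]$. Since the $\fx[j]$ are i.i.d.\ $\gaussian{0}{\sigma^2\mI}$ by hypothesis, the partial sum satisfies $\unx[k]\sim\gaussian{0}{(k-1)\sigma^2\mI}$, and similarly $\unx[k+1]\sim\gaussian{0}{k\sigma^2\mI}$.

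Next I would replace layer normalization by its scaling action. Under the stated assumptions — $\|\nx[k]\|=\sqrt d$ and $\partial\LN(\vx)/\partial\vx\approx(\sqrt d/\|\vx\|)\mI$, i.e.\ $\LN(\vx)\approx(\sqrt d/\|\vx\|)\vx$ once mean-centering is ignored — together with the concentration of the norm of a high-dimensional Gaussian, $\|\unx[k]\|=\|\sum_{j=1}^{k-1}\fx[j]\|\approx\sigma\sqrt{(k-1)d}$, we obtain $\nx[k]\approx\frac{1}{\sigma\sqrt{k-1}}\sum_{j=1}^{k-1}\fx[j]$ and likewise $\nx[k+1]\approx\frac{1}{\sigma\sqrt{k}}\sum_{j=1}^{k}\fx[j]$. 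This is the crux of the argument and its main weak point: LN is genuinely nonlinear, so the conclusion that $\nx[k]$ is \emph{exactly} Gaussian holds only in the wide-network limit where $\|\unx[k]\|$ concentrates at its mean; once that approximation is granted, everything downstream is exact.

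With these expressions the increment is a fixed linear combination of two independent Gaussians,
\[
\nx[k+1]-\nx[k]\approx\Big(\tfrac{1}{\sigma\sqrt k}-\tfrac{1}{\sigma\sqrt{k-1}}\Big)\sum_{j=1}^{k-1}\fx[j]\;+\;\tfrac{1}{\sigma\sqrt k}\,\fx[k],
\]
where $\sum_{j=1}^{k-1}\fx[j]\sim\gaussian{0}{(k-1)\sigma^2\mI}$ is independent of $\fx[k]\sim\gaussian{0}{\sigma^2\mI}$. Hence the increment is $\gaussian{0}{\omega_k^2\mI}$ with per-coordinate variance $\omega_k^2=(k-1)\big(\tfrac{1}{\sqrt k}-\tfrac{1}{\sqrt{k-1}}\big)^2+\tfrac1k$. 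The remaining step is routine algebra: expanding the square gives $(k-1)\big(\tfrac1k-\tfrac{2}{\sqrt{k(k-1)}}+\tfrac1{k-1}\big)+\tfrac1k=2-2\sqrt{(k-1)/k}=\tfrac{2(\sqrt k-\sqrt{k-1})}{\sqrt k}$, and rationalizing $\sqrt k-\sqrt{k-1}=1/(\sqrt k+\sqrt{k-1})$ yields $\omega_k^2=\frac{2}{\sqrt{k}(\sqrt{k-1}+\sqrt{k})}$, as claimed.

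To close I would add the interpretive remark that $\omega_k^2=\Theta(1/k)\to0$, so consecutive normalized hidden states become ever closer in deeper layers — the quantitative form of the representation-collapse phenomenon — and note that the identical computation applied to the output $\cmy=\LN(\unx[N+1])$ gives the analogous $\Theta(1/N)$ decay for $|\vy_N-\vy_{N-1}|$.
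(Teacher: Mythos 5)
Your proposal is correct and follows essentially the same route as the paper's own proof: unroll $\unx[k]=\sum_{j=1}^{k-1}\fx[j]$, approximate LN as division by $\sigma\sqrt{k-1}$, write $\nx[k+1]-\nx[k]$ as a linear combination of the independent Gaussians $\unx[k]$ and $\fx[k]$, and sum the variances to get $\omega_k^2=\frac{2}{\sqrt{k}(\sqrt{k-1}+\sqrt{k})}$. The only difference is that you make the wide-network concentration justification of the LN-as-scaling step explicit, which the paper leaves implicit.
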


\begin{proof}
    As $\fx[k] \sim \gaussian{0}{\sigma^2\mI}$, we have $\unx[k] = \sum_{j=1}^{k-1}\fx[j]$ thus $\unx[k] \sim \gaussian{0}{(k-1)\sigma^2\mI}$.
    For the normalization layer, we approximate its effect as follows, $\nx[k] = \frac{\unx[k]}{\sqrt{k-1}\sigma}$.
    Then we have 
    \begin{align*}
        \nx[k+1]-\nx[k] &= \frac{\unx[k+1]}{\sqrt{k}\sigma} - \frac{\unx[k]}{\sqrt{k-1}\sigma} 
        %&= \frac{\sqrt{k-1}(\unx[k] + \fx[k]) - \sqrt{k}\unx[k]}{\sqrt{k(k-1)}\sigma} \nonumber \\
        =\frac{\sqrt{k-1}-\sqrt{k}}{\sqrt{k(k-1)}\sigma}\cdot \unx[k] + \frac{1}{\sqrt{k}\sigma}\cdot \fx[k].
    \end{align*}

We know that $\frac{\sqrt{k-1}-\sqrt{k}}{\sqrt{k(k-1)}\sigma}\cdot \unx[k]\sim \gaussian{0}{\frac{(\sqrt{k-1}-\sqrt{k})^2}{k}\mI}$ and $\frac{1}{\sqrt{k}\sigma}\cdot \fx[k]\sim \gaussian{0}{\frac{1}{k}\mI}$. Because $\unx[k]$ and $\fx[k]$ are independent, we have $\va_{k+1}-\va_k \sim \gaussian{0}{\omega_k^2\mI}$ and $\omega_k^2 = \frac{(\sqrt{k-1}-\sqrt{k})^2}{k} + \frac{1}{k}=\frac{2}{\sqrt{k}(\sqrt{k-1}+\sqrt{k})}$.
\end{proof}

\begin{corollary}\label{coro:diff_order}
For each coordinate $i$ of $\nx[k+1]-\nx[k]$, we have  $\EXP[\vert (\nx[k+1]-\nx[k])_i \vert] \sim O(\frac{1}{\sqrt{k}})$
\end{corollary}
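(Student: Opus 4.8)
The plan is to read off the claim directly from the preceding theorem together with the standard formula for the mean of a folded (half-)normal random variable. First I would invoke that theorem to write $(\nx[k+1]-\nx[k])_i \sim \gaussian{0}{\omega_k^2}$ with $\omega_k^2 = \frac{2}{\sqrt{k}(\sqrt{k-1}+\sqrt{k})}$; that is, each coordinate is a centered univariate Gaussian of variance $\omega_k^2$, since the covariance in the theorem is $\omega_k^2\mI$.

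Second, I would recall that for $Z \sim \gaussian{0}{\sigma^2}$ one has $\EXP[\lvert Z\rvert] = \sigma\sqrt{2/\pi}$, the mean of the half-normal distribution. Applying this coordinatewise gives $\EXP[\lvert(\nx[k+1]-\nx[k])_i\rvert] = \omega_k\sqrt{2/\pi}$, so the whole problem reduces to estimating $\omega_k$.

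Third, I would control $\omega_k$ by elementary bounds on the denominator: since $\sqrt{k} \le \sqrt{k-1}+\sqrt{k} \le 2\sqrt{k}$, we obtain $\tfrac{1}{k} \le \omega_k^2 \le \tfrac{2}{k}$, hence $\omega_k = \Theta(1/\sqrt{k})$, and in particular $\omega_k = O(1/\sqrt{k})$. Combining with the previous step yields $\EXP[\lvert(\nx[k+1]-\nx[k])_i\rvert] = \omega_k\sqrt{2/\pi} = O(1/\sqrt{k})$, which is the claim; using the sharper asymptotic $\sqrt{k-1}+\sqrt{k}\sim 2\sqrt{k}$ even gives $\omega_k^2 \sim 1/k$ and the explicit constant $\sqrt{2/\pi}$.

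There is essentially no real obstacle: this is an immediate corollary of the theorem. The only points deserving a word of care are citing the correct constant $\sqrt{2/\pi}$ in the folded-normal mean, and observing that the corollary introduces no new approximation — it inherits exactly the Gaussian/layer-norm approximation ($\nx[k] \approx \unx[k]/(\sqrt{k-1}\sigma)$) already used in the theorem. If a two-sided $\Theta(1/\sqrt{k})$ statement is wanted rather than just the $O$-bound, the lower estimate $\omega_k^2 \ge 1/k$ above already supplies it.
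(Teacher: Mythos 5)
Your proof is correct and is exactly the argument the paper leaves implicit: each coordinate is $\gaussian{0}{\omega_k^2}$ by the preceding theorem, the half-normal mean gives $\EXP[\vert\cdot\vert]=\omega_k\sqrt{2/\pi}$, and $\omega_k^2=\frac{2}{\sqrt{k}(\sqrt{k-1}+\sqrt{k})}=\Theta(1/k)$ yields the $O(1/\sqrt{k})$ rate. Nothing further is needed.
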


From Corollary~\ref{coro:diff_order}, we can see that the expectation of $\vert (\va_{k+1}-\va_k)_i \vert$ decreases to 0 as $k$ increases to infinity with rate $1/\sqrt{k}$.
This means, when the number of layers increases, the inputs to later layers will be similar to each other.
Thus, the capability of the later layers are not fully used because they cannot further refine the representations.

\begin{corollary}\label{coro:output}
When adding an extra layer to a $N-1$ layer Pre-LN Transformer, the output difference $\EXP[\vert(\vy_{N} - \vy_{N-1})_i\vert]\sim O(\frac{1}{\sqrt{N}})$ for each coordinate $i$.
\end{corollary}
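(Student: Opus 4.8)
The plan is to reduce Corollary~\ref{coro:output} to the hidden‑state increment result by identifying the Pre‑LN network output with its final normalized state. First I would observe that in the idealized model of the preceding theorem — where $\fx[k]\sim\gaussian{0}{\sigma^2\mI}$ independently, and appending an $N$‑th block leaves the first $N-1$ block outputs unchanged — the output of the $(N-1)$‑layer Pre‑LN Transformer is $\vy_{N-1}=\LN\!\left(\sum_{k=1}^{N-1}\fx[k]\right)$ and that of the $N$‑layer Transformer is $\vy_{N}=\LN\!\left(\sum_{k=1}^{N}\fx[k]\right)$. Using the very layer‑normalization approximation employed in the proof of that theorem, namely $\LN(\unx[k+1])\approx\unx[k+1]/(\sqrt{k}\,\sigma)$ with $\unx[k+1]=\sum_{j=1}^{k}\fx[j]$, these are exactly $\vy_{N-1}=\nx[N]$ and $\vy_{N}=\nx[N+1]$ in the notation of the theorem. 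Hence $\vy_{N}-\vy_{N-1}=\nx[N+1]-\nx[N]$, i.e. the output change is precisely the hidden‑state increment at index $k=N$.

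Next I would instantiate the theorem at $k=N$: $\vy_{N}-\vy_{N-1}\sim\gaussian{0}{\omega_N^2}\mI$ with $\omega_N^2=\frac{2}{\sqrt{N}(\sqrt{N-1}+\sqrt{N})}$, and then apply Corollary~\ref{coro:diff_order} (also at $k=N$) to conclude $\EXP[\vert(\vy_{N}-\vy_{N-1})_i\vert]\sim O(1/\sqrt{N})$ for each coordinate $i$. Alternatively one can argue from scratch: for a scalar $Z\sim\gaussian{0}{\omega_N^2}$ one has $\EXP[\vert Z\vert]=\sqrt{2/\pi}\,\omega_N$, and since $\sqrt{N}\le\sqrt{N-1}+\sqrt{N}\le 2\sqrt{N}$ one gets $\tfrac{1}{N}\le\omega_N^2\le\tfrac{2}{N}$, so $\omega_N=\Theta(1/\sqrt{N})$ and the coordinate‑wise expected change is $\Theta(1/\sqrt{N})$ — in particular $O(1/\sqrt{N})$, which is the claim. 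The fact that the bound is the same for every coordinate $i$ is immediate from the coordinatewise i.i.d.\ structure of the increment.

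The mathematical content here is light, so the main (and mild) obstacle is bookkeeping rather than computation: being explicit that the "output" $\vy$ coincides with the internal normalized state $\nx$ under the stated LN approximation, and that "adding a layer" in this idealized setting means appending one more independent Gaussian summand while fixing the earlier summands, so that $\vy_N-\vy_{N-1}$ telescopes to a single increment. Once this identification is granted, Corollary~\ref{coro:output} is an immediate specialization of the hidden‑state‑increment theorem (and Corollary~\ref{coro:diff_order}) to $k=N$.
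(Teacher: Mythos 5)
Your proposal matches the paper's argument: the paper likewise identifies $\vy_{N-1}=\nx[N]$ and $\vy_{N}=\nx[N+1]$, so that $\vy_N-\vy_{N-1}$ is the hidden-state increment at $k=N$, and then invokes Corollary~\ref{coro:diff_order}. Your additional explicit computation of $\EXP[\vert Z\vert]=\sqrt{2/\pi}\,\omega_N$ with $\omega_N=\Theta(1/\sqrt{N})$ is just a more detailed rendering of the same step.
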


The proof of Corollary~\ref{coro:output} is in Appendix~\ref{app:prof_output}, it means that adding extra layer in the deep Pre-LN Transformer has little impact on the output.
Intuitively, this means the extra layer also cannot refine the model outputs and the model's capacity is not fully used.

\subsection{Analysis of \ourM{}}

\begin{figure}[!htbp]
    \centering
    \includegraphics[width=\linewidth]{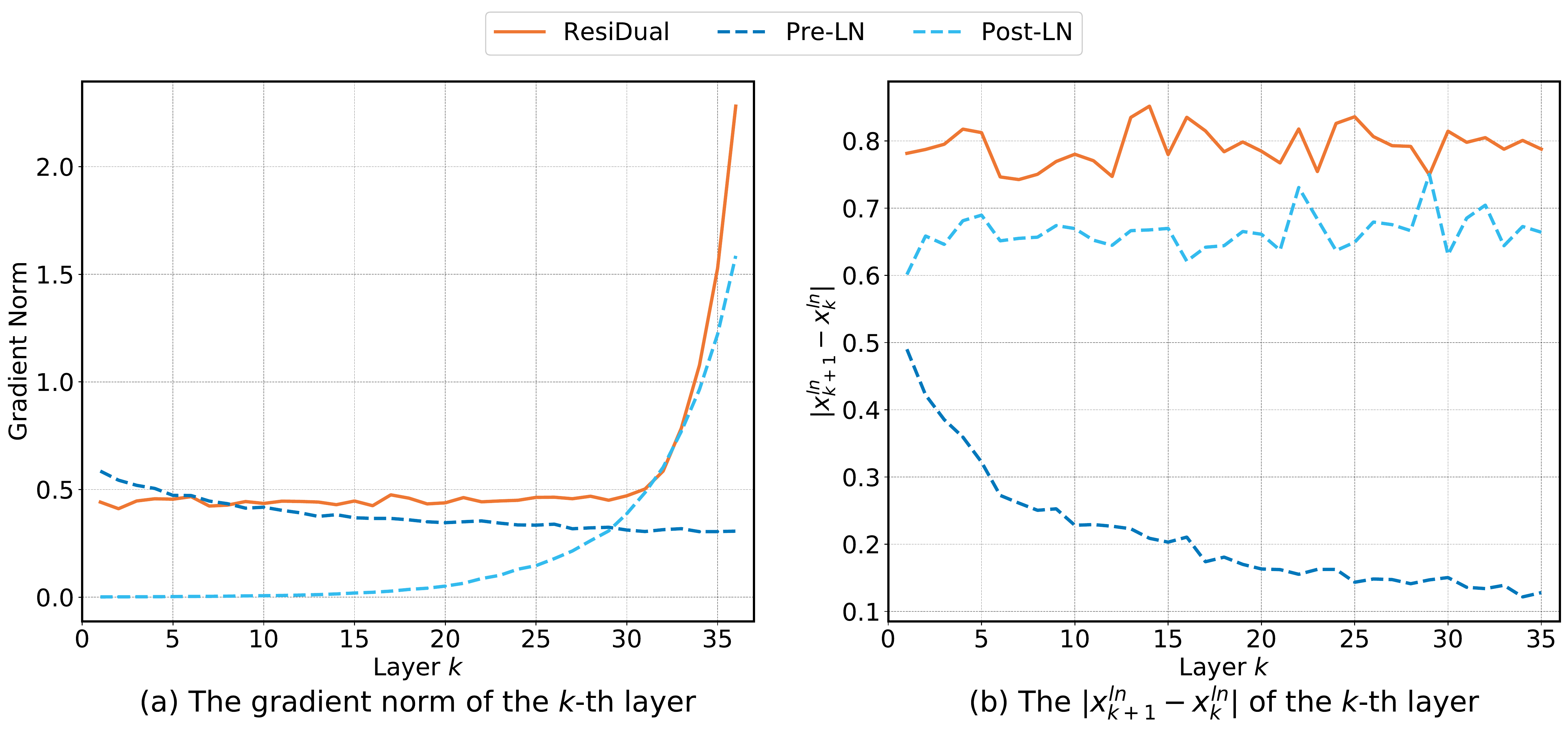}
    \caption{Study of the Gradient Norm  and hidden representation w.r.t layer $k$ in each method.}
    \label{fig:study}
\end{figure}

\paragraph{\ourM{} Does Not Suffer Gradient Vanishing Issue}

For the \ourM{} architecture (Figure~\ref{fig:system}c), we can view it as a mixture of Post-LN Transformer and Pre-LN Transformer. Specifically, in the forward process, \ourM{} Transformer behaves exactly the same as Post-LN except adding a dual branch of normalized sum of all block outputs in the end. In the backward process, the error signal back-propagates through both branches. We can explicitly write down the gradients at block $k$ as follows
\begin{flalign}
\frac{\partial {\mathcal{L}}}{\partial \vw_k}
=\left(\frac{\partial {\mathcal{L}}}{\partial \vw_k}\right)_{post} +\left(\frac{\partial {\mathcal{L}}}{\partial \vw_k}\right)_{dual},
\end{flalign}
where $\left(\frac{\partial {\mathcal{L}}}{\partial \vw_k}\right)_{post}$ denotes the gradient component from the Post-LN branch and $\left(\frac{\partial {\mathcal{L}}}{\partial \vw_k}\right)_{dual}$ denotes the gradient component from the dual  branch. Specifically, $$\left(\frac{\partial {\mathcal{L}}}{\partial \vw_k}\right)_{post} =\frac{\partial {\mathcal{L}}}{\partial\overrightarrow{\vx}_{N+1}}(\prod_{l=k}^{N}\frac{\partial\overrightarrow{\vx}_{l+1}}{\partial\overrightarrow{\vx}^{ln}_{l}}\frac{\partial\overrightarrow{\vx}_{l}^{ln}}{\partial\overrightarrow{\vx}_l})\frac{\partial\overrightarrow{\vx}_{k}^f}{\partial \vw_k} 
= \frac{\partial {\mathcal{L}}}{\partial\overrightarrow{\vx}_{N+1}}(\prod_{l=k}^{N} (\mI+\frac{\partial\overrightarrow{\vx}^f_{l}}{\partial\overrightarrow{\vx}^{ln}_l}) \frac{\partial\overrightarrow{\vx}^{ln}_l}{\partial\overrightarrow{\vx}_l})\frac{\partial\overrightarrow{\vx}_{k}}{\partial \vw_k},$$
and 
$$\left(\frac{\partial {\mathcal{L}}}{\partial \vw_k}\right)_{dual} =\frac{\partial {\mathcal{L}}}{\partial\overrightarrow{\vx}_{N+1}}(\prod_{l=k+1}^{N}\frac{\partial\overrightarrow{\vx}_{l+1}}{\partial\overrightarrow{\vx}_{l}})\frac{\partial\overrightarrow{\vx}_{k+1}^f}{\partial \vw_k} 
= \frac{\partial {\mathcal{L}}}{\partial\overrightarrow{\vx}_{N+1}}(\prod_{l=k+1}^{N} (\mI+\frac{\partial\overrightarrow{\vx}^f_{l}}{\partial\overrightarrow{\vx}^{ln}_l}\frac{\partial\overrightarrow{\vx}^{ln}_{l}}{\partial\overrightarrow{\vx}_l}) )\frac{\partial\overrightarrow{\vx}_{k}^f}{\partial \vw_k}.$$
% From Theorem \ref{thm:gradient-norm}, we have a lower and upper bound on $\frac{\partial {\mathcal{L}}}{\partial \vw_k}$, 
% \begin{flalign}
%    \left\|\frac{\partial {\mathcal{L}}}{\partial \vw_k}\right\|_F\ge \left|\mathcal{O}\left(\left(1/2\right)^{(N-k)/2}e^{\sqrt{N-k}}d\sqrt{\log{d}}\right) - \mathcal{O}\left(d\sqrt{\frac{\log(N-k)\cdot\log{d}}{N}}\right)\right| \\
%    \left\|\frac{\partial {\mathcal{L}}}{\partial \vw_k}\right\|_F \le \mathcal{O}\left(\left(1/2\right)^{(N-k)/2}e^{\sqrt{N-k}}d\sqrt{\log{d}}\right) + \mathcal{O}\left(d\sqrt{\frac{\log(N-k)\cdot\log{d}}{N}}\right).
% \end{flalign}
We see that when $k$ is small, the Pre-LN gradient component dominates and when $k$ is close to $N$, the Post-LN gradient component dominates. It is safe to estimate the gradient norm of the $k$-th block in \ourM{} Transformer as follows, %\begin{flalign} \left\|\frac{\partial {\mathcal{L}}}{\partial \vw_k}\right\|_F \approx \max \left\{\mathcal{O}\left(\left(1/2\right)^{(N-k)/2}e^{\sqrt{N-k}}d\sqrt{\log{d}}\right), \mathcal{O}\left(d\sqrt{\frac{\log(N-k)\cdot\log{d}}{N}}\right)\right\}.\end{flalign} 
\begin{flalign}\label{eq:our_no_vanish}
  \left\|\frac{\partial {\mathcal{L}}}{\partial \vw_k}\right\|_F \approx \max \left\{\mathcal{O}\left(\left(1/2\right)^{(N-k)/2}e^{\sqrt{N-k}}\right), \mathcal{O}\left(\sqrt{\frac{\log(N-k)\cdot}{N}}\right)\right\},
\end{flalign} 
where again we ignore the terms irrelevant with $N, k$. Therefore, the \ourM{} architecture does not suffer gradient vanishing problem. It is worthy to note gradient vanishing problem does not directly relate to inefficient training because in Adam the actual update is rescaled to be normal even if extreme small gradient is obtained. However, the gradient vanishing problem would affect the stability of the Adam optimizer as we argue as follows.

In Figure~\ref{fig:study}(a), we show the gradient distribution for different methods.
We can find that the Post-LN has almost zero gradient for early layers, while the \ourM{} (orange line) do not have such an issue.
The clearly shows that our method can ensure a lower-bound of the gradient norm.
Meanwhile, note that non of these models have the exploding-gradient issue.
According to Theorem~\ref{thm:gradient-norm}, the gradient of last layer (i.e., $k=N$) is not related to $N$.

\paragraph{\ourM{} Does Not Suffer Representation Collapse Issue}

However, the Post-LN and \ourM{} do not have this issue. Formally, 
\begin{theorem}
   In Post-LN and \ourM{}, assume $\fx[k] \sim \gaussian{0}{\sigma^2\mI}$ independently for all $k\in[N]$, the $\nx[k+1]-\nx[k] \sim \gaussian{0}{\omega^2}$ where $\omega$ is not related to $k$.
\end{theorem}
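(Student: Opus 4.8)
The plan is to repeat the distributional computation used for Pre-LN, but track the Post-LN recursion instead. In Post-LN (and in the Post-LN branch of \ourM{}), the relevant hidden state is $\nx[k+1] = \LN(\nx[k] + \fx[k])$, rather than $\LN\big(\sum_{j<k}\fx[j]\big)$. First I would substitute the assumption $\fx[k]\sim\gaussian{0}{\sigma^2\mI}$ and use the standard layer-normalization approximation: if the pre-normalization input has i.i.d.\ coordinates with variance $v$, then $\LN$ rescales it by $\sqrt{d}/\|\cdot\|\approx 1/\sqrt{v}$, so the output has unit coordinate variance. Since $\nx[k]$ is (approximately) standardized with coordinate variance $1$ and $\fx[k]$ is independent with coordinate variance $\sigma^2$, the sum $\nx[k]+\fx[k]$ has coordinate variance $1+\sigma^2$, and therefore $\nx[k+1] = \frac{1}{\sqrt{1+\sigma^2}}(\nx[k]+\fx[k])$ to the same order of approximation. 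This is the key structural point: unlike Pre-LN, the normalization constant $\sqrt{1+\sigma^2}$ does \emph{not} depend on $k$, because only one block's output is being added at a time rather than an accumulating sum.

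Next I would compute the difference directly. Writing $c = 1/\sqrt{1+\sigma^2}$, we have
\begin{align*}
\nx[k+1]-\nx[k] = c\,(\nx[k]+\fx[k]) - \nx[k] = (c-1)\,\nx[k] + c\,\fx[k].
\end{align*}
Since $\nx[k]\sim\gaussian{0}{\mI}$ and $\fx[k]\sim\gaussian{0}{\sigma^2\mI}$ are independent, the right-hand side is a sum of independent centered Gaussians, hence $\nx[k+1]-\nx[k]\sim\gaussian{0}{\omega^2\mI}$ with $\omega^2 = (c-1)^2 + c^2\sigma^2$. Substituting $c=1/\sqrt{1+\sigma^2}$ gives a closed-form $\omega^2 = \big(1-\tfrac{1}{\sqrt{1+\sigma^2}}\big)^2 + \tfrac{\sigma^2}{1+\sigma^2}$, which manifestly has no dependence on $k$ — which is exactly the claim. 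For \ourM{}, the Post-LN branch obeys the identical recursion $\nx[k+1]=\LN(\nx[k]+\fx[k])$ (the dual branch $\xd$ is separate and only feeds the final output), so the same computation applies verbatim.

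The main obstacle — and the place where the argument is heuristic rather than rigorous — is justifying that $\nx[k]$ stays approximately $\gaussian{0}{\mI}$ across layers, and more subtly that $\nx[k]$ and $\fx[k]$ can be treated as independent. In reality $\fx[k] = f_k(\LN(\nx[k]+\cdots))$ is a function of the earlier states, so the independence assumption is the modeling idealization stated in the theorem hypothesis; I would simply invoke it as given. The layer-normalization approximation $\partial\LN(\vx)/\partial\vx \approx (\sqrt{d}/\|\vx\|)\mI$ and $\|\vx^{ln}_k\|=\sqrt{d}$ are already assumed earlier in Section~\ref{sec:grad_imba}, so I would reuse them to replace $\LN(\vz)$ by $\vz/\sqrt{\Var[\vz]}$ throughout. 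Finally I would remark that, as in Corollary~\ref{coro:diff_order}, one can read off $\EXP[|(\nx[k+1]-\nx[k])_i|] = \Theta(\omega)$, a constant independent of $k$, so the hidden representations keep changing at every depth and the collapse phenomenon of Pre-LN does not occur.
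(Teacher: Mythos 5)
Your proposal is correct and follows essentially the same route as the paper's proof: approximate $\LN(\nx[k]+\fx[k])$ by $(\nx[k]+\fx[k])/\sqrt{1+\sigma^2}$, write the difference as a linear combination of the independent Gaussians $\nx[k]$ and $\fx[k]$, and read off a $k$-independent variance — your $\omega^2 = \bigl(1-\tfrac{1}{\sqrt{1+\sigma^2}}\bigr)^2 + \tfrac{\sigma^2}{1+\sigma^2}$ simplifies to the paper's $2 - \tfrac{2\sqrt{1+\sigma^2}}{1+\sigma^2}$. Your added remarks on the heuristic nature of the independence and LN approximations go slightly beyond what the paper states but do not change the argument.
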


\begin{proof}
    As $\nx[k+1] = \LN(\unx[k]) = \LN(\nx[k] + \fx[k])$, and $\nx[k] \sim \gaussian{0}{\mI}, \fx[k] \sim \gaussian{0}{\sigma^2\mI}$, we have
    \begin{align*}
        \nx[k+1]-\nx[k] = \frac{\nx[k]+\fx[k]}{\sqrt{1+\sigma^2}} - \nx[k] 
         = \frac{(1-\sqrt{1+\sigma^2})\nx[k]+\fx[k]}{\sqrt{1+\sigma^2}}
    \end{align*}
    Thus, $\nx[k+1]-\nx[k] \sim \gaussian{0}{\omega^2}$ where $\omega^2 = 2 - 2\frac{\sqrt{1+\sigma^2}}{1+\sigma^2}$ and $\omega$ is not related to $k$.
\end{proof}

\begin{corollary}\label{coro:our_output}
When adding an extra layer to a $N-1$ layer Pre-LN Transformer, the output difference $\EXP[\vert(\vy_{N} - \vy_{N-1})_i\vert] \ge \sqrt{\frac{2}{\pi}} \omega$ for each coordinate $i$.
\end{corollary}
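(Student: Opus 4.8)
The plan is to adapt the variance computation from the preceding theorem on Post-LN/\ourM{} hidden-state changes to the \emph{output} of the model, and then translate the Gaussian variance bound into a lower bound on the expected absolute value of each coordinate. Concretely, I would note that for \ourM{}, the output is $\cmy = \nx[N+1] + \LN(\xd[N+1])$, so when we add an extra ($N$-th) layer, the change $\vy_N - \vy_{N-1}$ contains the term $\nx[N+1] - \nx[N]$ coming from the Post-LN branch. The key observation is that this Post-LN component of the output difference behaves exactly as analyzed in the previous theorem: under the assumption $\fx[k] \sim \gaussian{0}{\sigma^2\mI}$, we have $\nx[N+1] - \nx[N] \sim \gaussian{0}{\omega^2 \mI}$ with $\omega^2 = 2 - 2\frac{\sqrt{1+\sigma^2}}{1+\sigma^2}$ \emph{independent of $N$}, in sharp contrast to the Pre-LN case (Corollary~\ref{coro:output}) where the analogous quantity decays like $1/\sqrt{N}$.

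The main steps, in order, would be: (i) write $\vy_N - \vy_{N-1}$ for the \ourM{} architecture and isolate the Post-LN-branch contribution $\nx[N+1] - \nx[N]$; (ii) invoke the previous theorem to conclude this contribution is distributed as $\gaussian{0}{\omega^2 \mI}$; (iii) for a single coordinate, use the standard fact that if $Z \sim \gaussian{0}{\omega^2}$ then $\EXP[|Z|] = \sqrt{2/\pi}\,\omega$; (iv) argue that the full output difference $\EXP[\vert(\vy_N - \vy_{N-1})_i\vert]$ is at least this much — i.e., that the dual-branch term $\LN(\xd[N+1]) - \LN(\xd[N])$ does not cancel the Post-LN term on average, so adding it back can only (weakly) increase the expected magnitude, or at worst one argues the Post-LN branch alone already furnishes the stated lower bound. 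Then the inequality $\EXP[\vert(\vy_N - \vy_{N-1})_i\vert] \ge \sqrt{2/\pi}\,\omega$ follows.

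The main obstacle I anticipate is step (iv): justifying that the two branch contributions do not destructively interfere in expectation. A clean way around this is to observe that $\omega$ here is a constant not depending on $N$, so the point of the corollary is qualitative (the output genuinely keeps changing as depth grows, unlike Pre-LN); one can make step (iv) rigorous either by treating the dual-branch term as a separate independent Gaussian contribution and using that $\EXP|X+Y| \ge \EXP|X|$ fails in general but $\EXP|X+Y| \ge |\EXP[\,\cdot\,]|$-type bounds or a conditioning/symmetrization argument gives $\EXP_Y \EXP_X |X + Y| \ge \EXP_X|X|$ when $X$ is symmetric and independent of $Y$ (by Jensen in $Y$ conditionally on the sign structure of $X$), or more simply by noting the corollary is really comparing \ourM{} to the vanishing Pre-LN rate and it suffices to exhibit one branch with a depth-independent lower bound. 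I would present the Post-LN-branch term as the dominant, non-vanishing contribution and cite the symmetry/independence of $\fx[N]$ to get the constant $\sqrt{2/\pi}\,\omega$, mirroring exactly the structure of the proof of Corollary~\ref{coro:output} in the appendix but now with the $N$-independent $\omega$ in place of the $O(1/\sqrt N)$ term.
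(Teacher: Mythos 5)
Your route is essentially the paper's own: the paper likewise splits $\vy_{N}-\vy_{N-1}$ into the Post-LN-branch difference (distributed as $\gaussian{0}{\omega^{2}}$ by the preceding theorem) plus the dual-branch difference $\LN(\vx^{d}_{N+1})-\LN(\vx^{d}_{N})$, treats the latter as an independent zero-mean Gaussian $\gaussian{0}{\hat{\omega}_{N}^{2}}$, and concludes $\EXP[\vert(\vy_{N}-\vy_{N-1})_{i}\vert]=\sqrt{2/\pi}\sqrt{\omega^{2}+\hat{\omega}_{N}^{2}}\ge\sqrt{2/\pi}\,\omega$. The interference worry you raise in step (iv) is exactly what the paper settles by this independence/variance-addition assumption, and your Jensen-by-conditioning fallback (needing only that the dual-branch term is zero-mean and independent of the Post-LN term) is, if anything, a slightly more careful justification of the same bound.
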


The proof of \ref{coro:our_output} is in Appendix~\ref{app:prof_our_output}.
From these analyse, we can see that the variance of $\nx[k+1]-\nx[k]$ will not decrease when the depth increases, so that later layers can continue refining the hidden representation.
Meanwhile, according to~Corollary~\ref{coro:our_output}, the model output can also be refined with a lower bound that not related to depth.
In another words, \ourM{} can avoid the representation bottleneck of Pre-LN model.
To demonstrate this, we also show the $\vert\nx[k+1]-\nx[k]\vert$ for different architectures in Figure~\ref{fig:study}(b).
As the lines show, our method (orange line) has a consistent value of $\vert\nx[k+1]-\nx[k]\vert$, while the Pre-LN's value will decrease when the depth is high.

\section{Experiments}
\label{sec:exp}
\subsection{Experimental settings}

We conducted experiments on the machine translation task, a representative task for the Transformer model, to evaluate our method. 
The experimental settings were as follows:

\paragraph{Data}

We conducted experiments on three datasets: the IWSLT-14 English to German (EN$\to$DE) dataset~\citep{cettolo2014report}, the WMT German to English (DE$\to$EN) dataset~\citep{bojar2014findings}, and the OPUS-100 multilingual dataset~\citep{zhang2020opus100}. These datasets were chosen based on their varying data sizes.
The IWSLT-14 EN$\to$DE dataset is relatively small, with only $140k$ sentence pairs. We followed the scripts in FairSeq~\citep{ott2019fairseq} to preprocess the data.
The WMT DE$\to$EN dataset is larger, with $1.4M$ sentence pairs. We followed the preprocessing steps outlined in~\citet{takase2021wmt} by tokenizing the data with Moses tokenizer and then processing it with BPE~\citep{sennrich2016bpe}.
The model was trained on the WMT-14 training set and evaluated on the test set from years 2010 to 2016, following~\citet{takase2021wmt}.
The OPUS-100 dataset is a large-scale multilingual dataset containing 100 languages and approximately $55M$ sentence pairs. We used the script from~\citet{zhang2020opus100} to tokenize the data and used SentencePiece~\citep{kudo2018sentencepiece} to segment the tokens.
All data processing scripts are available in the Appendix~\ref{app:impl}.
%Because we train our model for both to and from English directions, the total training data is about $110M$.

\paragraph{Model}
Our model is implemented using the FairSeq~\citep{ott2019fairseq} framework. We follow the convention of using the same model size as previous works. Notably, our method does not introduce additional parameters to the vanilla Transformer network.
We trained our models using the Adam~\citep{kingma2014adam} optimizer with $\beta=(0.9, 0.98), \epsilon=$ and used the \texttt{invert\_sqrt} learning rate scheduler with warm up, unless specified otherwise
For detailed training hyper-parameters, please refer to the Appendix~\ref{app:impl}.

\subsection{Experimental Results on IWSLT}

% \begin{table}[!htbp]
\begin{wraptable}{r}{6.2cm}
    \centering
    \begin{tabular}{lcc} 
    \toprule
     \textbf{Method} & \textbf{E6D6} & \textbf{E12D12} \\
     \midrule
     Post-LN & 35.37 & Fail \\
     Pre-LN & 35.12 & 35.18 \\
     DeepNet & 35.34 & 35.39 \\
     Admin & 35.50 & 35.67 \\
     T-Fixup & 34.88 & 35.45 \\
     % More Baselines &? & ? \\
     \midrule
     \textbf{\ourM (Ours)} & \textbf{35.63} & \textbf{36.09} \\
    \bottomrule
    \end{tabular}
    \caption{Experimental Results on IWSLT.}
    \label{tab:iwslt}
\end{wraptable}
% \end{table}

The experimental results of the IWSLT'14 dataset are presented in Table~\ref{tab:iwslt}. Two types of models were used: shallow models with 6-layer encoders and 6-layer decoders (E6D6), and deep models with 12-layer encoders and 12-layer decoders (E12D12).
We made the following observations based on the results:

Firstly, the Post-LN method was successful in converging for E6D6 but not for E12D12.
Secondly, the Pre-LN method converged in both depths, but its performance (35.12, 35.18) was inferior to that of the Post-LN E6D6 (35.37) or our E6D6 (35.63).
Thirdly, the methods such as DeepNet~\citep{wang2022deepnet} and Admin~\citep{radford2018improving} only showed  a slight improvement over the vanilla models, and our method achieved best performance. Especially, in E12D12, we have 0.9-point BLEU gain over the standard Pre-LN model.
%Thirdly, while methods such as DeepNet and Admin only showed a slight improvement of about 0.1 when using E12D12 over E6D6, our E12D12 model outperformed the E6D6 model by 0.5 points.
%Finally, in terms of BLEU score, our E12D12 model showed a 0.9-point improvement over the Pre-LN method, while the E6D6 model showed a 0.3-point gain.
It is also worth noting that our preliminary experiments revealed that increasing the model depth further led to over-fitting issues due to limited data. Therefore, future experiments will focus on larger datasets.

\subsection{Experimental Results on WMT}
\begin{wraptable}{r}{6.2cm}
    \centering
    \begin{tabular}{ccc}
    \toprule
    \textbf{Method} & \textbf{E6D6} & \textbf{E18D18}  \\
    \midrule
    Pre-LN  &  26.10 & 26.57\\ 
    Post-LN  & 26.59 & Fail\\ 
    DLCL  & 26.52 & 26.90 \\ 
    T-Fixup  & 26.43 & 26.94\\ 
    DeepNet  & 26.38 & 27.13\\ 
    Admin & 26.49 & 26.86\\ 
    B2T & 26.53 &27.30 \\ 
    \midrule
    \textbf{\ourM{}(Ours)} & \textbf{26.85} & \textbf{27.65}\\
    \bottomrule
    \end{tabular}
    \caption{Experimental Results on WMT with E6D6 and E18D18 models.}
    \label{tab:wmt_overall}
% \end{table*}
\end{wraptable}
The experimental results on shallow (E6D6) and deep (E18D18) models are presented in Table~\ref{tab:wmt_overall}. 
We only report the average score here and more details can be found in Table~\ref{tab:wmt_e6d6} and Table~\ref{tab:wmt_e18d18} in Appendix~\ref{app:wmt}.
Our observations are summarized below.

Firstly, we find that the Post-LN model can only converge in the E6D6 setting but not in E18D18 setting. Secondly, the Pre-LN model shows convergence in both E6D6 and E18D18. However, the performance of the Pre-LN model in E18D18 (26.57) is similar to that of the Post-LN model in E6D6 (26.59). Finally, our method achieved the best performance for both shallow and deep models. Particularly, we observed an improvement in the Pre-LN performance by 1.1-point for the E18D18 model.

\subsection{Experimental Results on OPUS-100}

We evaluate our method on the OPUS-100 dataset, which consists of 100 language pairs
and $55M$ parallel sentence pairs.
Because we trained single model for both from English (EX) and to English (XE) direction, the total data size is about $110M$.
Table~\ref{tab:opus-100} shows the experimental results.
In addition to the original baselines provided by ~\citet{zhang2020opus100}, we also reproduced the 18-layer encoder and  18-layer decoder model (E18D18).
We found that the Post-LN model failed to converge thus only show the Pre-LN results in Table.
%We compare our model with three kinds of baselines: the standard Pre-LN models by~\citet{zhang2020opus100}, our reproduced E18D18 model (an 18-layer encoder and an 18-layer decoder).
As we can see from the table, our method achieves about 0.7 BLEU points over the standard Pre-LN model.
The BLEU score is almost identical to a 100-layer DeepNet~\citep{wang2022deepnet} model, which is about 5 times deeper of our model.
This clearly demonstrates that our model can more effectively use deeper layers.

\begin{table}[!htbp]
    \centering
    \begin{tabular}{cc|ccc}
    \toprule
    \textbf{Method} & \textbf{\#Layers} & \textbf{EX} & \textbf{XE} & \textbf{ALL} \\ \midrule 
    \multirow{3}{*}{~\citet{zhang2020opus100}} & 6 & 21.4 & 27.5 & 24.5 \\
     & 12 & 22.9 & 29.5 & 26.2 \\
     & 24 & 24.0 & 31.4 & 27.7 \\
    Pre-LN & 18 & 27.9 & 32.8&30.3 \\
    DeepNet~\citep{wang2022deepnet} & 100&  29.0 &33.2 & 31.1\\
    \midrule
    \textbf{\ourM (Ours)} & 18 &  28.7 & 33.4 & 31.0 \\
    \bottomrule
    \end{tabular}
    \caption{Experimental Results on OPUS-100 Dataset.}
    \label{tab:opus-100}
\end{table}

\subsection{Study of Learning-Rate Warm-Up}

One of the objectives of our approach is to facilitate easy and stable training for Transformer models. To empirically demonstrate this, we compare our method with Post-LN and Pre-LN using different learning rate schedules on the IWSLT dataset. Table~\ref{tab:iwslt_nowarm} presents the results for various models with or without learning-rate warm-up. Further details about different learning-rate schedulers can be found in Table~\ref{tab:iwslt_nowarm_full} in the Appendix~\ref{app:warmup}. 

\begin{wraptable}{r}{6.5cm}
    \centering
    \begin{tabular}{@{}cc|cc@{}} \toprule
    \textbf{Method} & \textbf{Warm-Up} & \textbf{E6D6} & \textbf{E12D12} \\ \midrule
    \multirow{2}{*}{Post-LN}  & Yes & 35.37 & Fail \\
     & No & Fail & Fail \\ \midrule
    \multirow{2}{*}{Pre-LN} & Yes & 35.12 & 35.18 \\
     & No & 32.28 & 31.82 \\ \midrule
    \multirow{2}{*}{\ourM{}} & Yes & 35.63 & \textbf{36.09} \\
     & No  & \textbf{35.76} & 35.57 \\ \bottomrule
    \end{tabular}
    \caption{Study of Learning-Rate Warm-Up on Different Models.}
    \label{tab:iwslt_nowarm}
\end{wraptable}

We observe that Post-LN necessitates warm-up for convergence, while Pre-LN and our method can train effectively without it. Additionally, our method demonstrates marginally better performance without warm-up than with it in the E6D6 model. These findings suggest that our approach combines the advantages of Pre-LN in terms of training ease and Post-LN in terms of performance.

\section{Conclusion}
The aim of this paper was to address the problem of designing the Transformer architecture and specifically, how to use the residual connection in the network. The paper analyzed the shortcomings of two commonly used variants, Pre-LN and Post-LN, and proposed a new method named \ourM{} to solve both issues. The new method utilizes two residual connections; one similar to Pre-LN to avoid the gradient vanish issue, and another similar to Post-LN to avoid the representation collapse issue. Theoretical analysis confirmed that the proposed model can overcome both issues while retaining the benefits of both residual connections. Empirical results demonstrated strong performance on various benchmarks. Overall, this work contributes to the development of the Transformer model and provides an effective solution for optimizing it with improved performance. We hope that our findings and proposed model will inspire further research and progress in this field.

\bibliography{example_paper}

%%%%%%%%%%%%%%%%%%%%%%%%%%%%%%%%%%%%%%%%%%%%%%%%%%%%%%%%%%%%

\if0
\section*{Checklist}

%%% BEGIN INSTRUCTIONS %%%
The checklist follows the references.  Please
read the checklist guidelines carefully for information on how to answer these
questions.  For each question, change the default \answerTODO{} to \answerYes{},
\answerNo{}, or \answerNA{}.  You are strongly encouraged to include a {\bf
justification to your answer}, either by referencing the appropriate section of
your paper or providing a brief inline description.  For example:
\begin{itemize}
  \item Did you include the license to the code and datasets? \answerYes{See Section~\ref{gen_inst}.}
  \item Did you include the license to the code and datasets? \answerNo{The code and the data are proprietary.}
  \item Did you include the license to the code and datasets? \answerNA{}
\end{itemize}
Please do not modify the questions and only use the provided macros for your
answers.  Note that the Checklist section does not count towards the page
limit.  In your paper, please delete this instructions block and only keep the
Checklist section heading above along with the questions/answers below.
%%% END INSTRUCTIONS %%%

\begin{enumerate}

\item For all authors...
\begin{enumerate}
  \item Do the main claims made in the abstract and introduction accurately reflect the paper's contributions and scope?
    \answerTODO{}
  \item Did you describe the limitations of your work?
    \answerTODO{}
  \item Did you discuss any potential negative societal impacts of your work?
    \answerTODO{}
  \item Have you read the ethics review guidelines and ensured that your paper conforms to them?
    \answerTODO{}
\end{enumerate}

\item If you are including theoretical results...
\begin{enumerate}
  \item Did you state the full set of assumptions of all theoretical results?
    \answerTODO{}
        \item Did you include complete proofs of all theoretical results?
    \answerTODO{}
\end{enumerate}

\item If you ran experiments...
\begin{enumerate}
  \item Did you include the code, data, and instructions needed to reproduce the main experimental results (either in the supplemental material or as a URL)?
    \answerTODO{}
  \item Did you specify all the training details (e.g., data splits, hyperparameters, how they were chosen)?
    \answerTODO{}
        \item Did you report error bars (e.g., with respect to the random seed after running experiments multiple times)?
    \answerTODO{}
        \item Did you include the total amount of compute and the type of resources used (e.g., type of GPUs, internal cluster, or cloud provider)?
    \answerTODO{}
\end{enumerate}

\item If you are using existing assets (e.g., code, data, models) or curating/releasing new assets...
\begin{enumerate}
  \item If your work uses existing assets, did you cite the creators?
    \answerTODO{}
  \item Did you mention the license of the assets?
    \answerTODO{}
  \item Did you include any new assets either in the supplemental material or as a URL?
    \answerTODO{}
  \item Did you discuss whether and how consent was obtained from people whose data you're using/curating?
    \answerTODO{}
  \item Did you discuss whether the data you are using/curating contains personally identifiable information or offensive content?
    \answerTODO{}
\end{enumerate}

\item If you used crowdsourcing or conducted research with human subjects...
\begin{enumerate}
  \item Did you include the full text of instructions given to participants and screenshots, if applicable?
    \answerTODO{}
  \item Did you describe any potential participant risks, with links to Institutional Review Board (IRB) approvals, if applicable?
    \answerTODO{}
  \item Did you include the estimated hourly wage paid to participants and the total amount spent on participant compensation?
    \answerTODO{}
\end{enumerate}

\end{enumerate}

\fi
%%%%%%%%%%%%%%%%%%%%%%%%%%%%%%%%%%%%%%%%%%%%%%%%%%%%%%%%%%%%

\appendix

\section{Proof of Theorem 3.1}

\begin{proof}
For the Post-LN Transformer, the gradient of the parameters in the $k$-th layer (take $\mW_k$ as an example) can be written as
\begin{flalign*}
    \frac{\partial {\mathcal{L}}}{\partial \vw_k}
&=\frac{\partial {\mathcal{L}}}{\partial \overrightarrow{\vx}^{ln}_{N+1}}\frac{\partial  \overrightarrow{\vx}^{ln}_{N+1}}{\partial  \overrightarrow{\vx}^{a}_{N}}\left(\prod_{l=k}^{N-1}\frac{\partial \overrightarrow{\vx}^{a}_{l+1}}{\partial \overrightarrow{\vx}^{ln}_{l+1}}\frac{\partial \overrightarrow{\vx}^{ln}_{l+1}}{\partial \overrightarrow{\vx}^{a}_l}\right)\frac{\partial \overrightarrow{\vx}_{k}^f}{\partial \vw_k} \\
&= \frac{\partial {\mathcal{L}}}{\partial \overrightarrow{\vx}^{ln}_{N+1}}\frac{\partial  \overrightarrow{\vx}^{ln}_{N+1}}{\partial  \overrightarrow{\vx}^{a}_{N}}\left(\prod_{l=k}^{N-1} \left(\mI+\frac{\partial \overrightarrow{\vx}^f_{l+1}}{\partial \overrightarrow{\vx}^{ln}_{l+1}}\right) \frac{\partial \overrightarrow{\vx}^{ln}_{l+1}}{\partial \overrightarrow{\vx}^{a}_l}\right)\frac{\partial \overrightarrow{\vx}_{k}^f}{\partial \vw_k}.
\end{flalign*}
We  care about the spectral norm of the term $\frac{\partial  \overrightarrow{\vx}^{ln}_{N+1}}{\partial  \overrightarrow{\vx}^{a}_{N}}\left(\prod_{l=k}^{N-1} \left(\mI+\frac{\partial \overrightarrow{\vx}^f_{l+1}}{\partial \overrightarrow{\vx}^{ln}_{l+1}}\right) \frac{\partial \overrightarrow{\vx}^{ln}_{l+1}}{\partial \overrightarrow{\vx}^{a}_l}\right)$, which varies for different blocks. 

For the feedforward layer and attention layer, we respectively have $l\in [N]$,
$$\frac{\partial \overrightarrow{\vx}^f_{l}}{\partial \overrightarrow{\vx}^{ln}_l} =   \begin{pmatrix}
    \mW_l^T &  &  \\
    & \ddots &  \\
     &  & \mW_l^T
  \end{pmatrix} \quad \text{and} \quad \frac{\partial \overrightarrow{\vx}^f_{l}}{\partial \overrightarrow{\vx}^{ln}_l} =   \begin{pmatrix}
    \frac{1}{n}\mW_{V,l}^T & \cdots & \frac{1}{n}\mW_{V,l}^T \\
   \vdots & \ddots &  \vdots\\
     \frac{1}{n}\mW_{V,l}^T& \cdots & \frac{1}{n}\mW_{V,l}^T
  \end{pmatrix}, $$
based on the setup of the feedforward layer and attention layer at the initialization. For the layer normalization layer, we have  
$$\frac{\partial \overrightarrow{\vx}^{ln}_{l+1}}{\partial \overrightarrow{\vx}^{a}_l} = \begin{pmatrix}
\frac{\partial\text{LN}({\vx}^{a(1)}_{l})}{\partial {\vx}^{a(1)}_{l}} &  &  \\
    & \ddots &  \\
     &  & \frac{\partial\text{LN}({\vx}^{a(n)}_{l})}{\partial {\vx}^{a(n)}_{l}}
  \end{pmatrix} =\begin{pmatrix}
     \frac{\sqrt{d}}{\|{\vx}^{a(1)}_{l}\|_2} \mI &  &  \\
    & \ddots &  \\
     &  &  \frac{\sqrt{d}}{\|{\vx}^{a(n)}_{l}\|_2} \mI
  \end{pmatrix} ,$$ as we assume on the Jacobian of layer normalization.
  
We note that $\mI+\frac{\partial \overrightarrow{\vx}^f_{l}}{\partial \overrightarrow{\vx}^{ln}_l}$ are block-circulant matrices for all $l$ and the product of block-circulant matrices is also block-circulant. We know a block-circulant matrix has the following property
\begin{equation*}
\begin{Vmatrix}
    \mB & \mA& \cdots & \mA \\
    \mA & \mB &\cdots&\mA\\
   \vdots & &\ddots &  \vdots\\
     \mA& \cdots & \mA&\mB
\end{Vmatrix}_2 = \|\mB+(n-1)\mA\|_2,
\end{equation*}
where $\mB$ and $\mA$ are square matrices and there are $n-1$ $\mA$s each row. Hence we have 
$$\left\|\frac{\partial  \overrightarrow{\vx}^{ln}_{N+1}}{\partial  \overrightarrow{\vx}^{a}_{N}}\left(\prod_{l=k}^{N-1} \left(\mI+\frac{\partial \overrightarrow{\vx}^f_{l+1}}{\partial \overrightarrow{\vx}^{ln}_{l+1}}\right) \frac{\partial \overrightarrow{\vx}^{ln}_{l+1}}{\partial \overrightarrow{\vx}^{a}_l}\right)\right\|_2 = \left(\prod_{l=k}^{N} \frac{\sqrt{d}}{\|{\vx}^{a(i)}_{l}\|_2} \right)\left\|\left(\prod_{l=k+1}^{N} (\mI+\vw_l^T)\right)\right\|_2, $$
where $\vw_l$ represents either $\mW_{V,l}$ or $\mW_l$. We know that with high probability, $\|{\vx}_{l}^{a(i)}\|_2 \in (1\pm \epsilon) \sqrt{2d}$ where $\epsilon$ is a small positive constant, based on the assumption $\|{\vx}^{ln(i)}_{l}\|_2 = \sqrt{d}$ and the random initialization of $\vw_l$ for all $i\in [n]$. Thus we have a term $\left(\prod_{l=k}^{N} \frac{\sqrt{d}}{\|{\vx}^{a(i)}_{l}\|_2} \right)\approx\mathcal{O}\left((1/2)^{(N-k)/2}\right)$.
Moreover, based on the random matrix argument \cite{zhang2022stabilize}, we have  with high probability,
$$\left\|\prod_{l=k}^{N} (\mI+\vw_l^T)\right\|_2 \approx \mathcal{O} (e^{\sqrt{N-k}}).$$
Therefore, we have $\|\frac{\partial {\mathcal{L}}}{\partial \vw_k}\|_F\approx \mathcal{O}((1/2)^{(N-k)/2}\cdot e^{\sqrt{N-k}})$, which diminishes exponentially as $N-k$ is large.

On the other hand, we have the bound for Pre-LN transformer as follows. $$\frac{\partial {\mathcal{L}}}{\partial \vw_k}
=\frac{\partial {\mathcal{L}}}{\partial \vy}\frac{\partial \vy}{\partial \overrightarrow{\vx}^{a}_{N+1}}\left(\prod_{l=k+1}^{N}\frac{\partial \overrightarrow{\vx}^{a}_{l+1}}{\partial \overrightarrow{\vx}^{a}_{l}}\right)\frac{\partial \overrightarrow{\vx}_{k}^f}{\partial \vw_k} 
= \frac{\partial {\mathcal{L}}}{\partial \vy}\frac{\partial \vy}{\partial \overrightarrow{\vx}^{a}_{N+1}}\left(\prod_{l=k+1}^{N} \left(\mI+\frac{\partial \overrightarrow{\vx}^f_{l}}{\partial \overrightarrow{\vx}^{ln}_l}\frac{\partial \overrightarrow{\vx}^{ln}_{l}}{\partial \overrightarrow{\vx}^{a}_l}\right) \right)\frac{\partial \overrightarrow{\vx}_{k}^f}{\partial \vw_k} .
$$
We know that with high probability, $\|{\vx}_{l}^{a(i)}\|_2 \in (1\pm \epsilon) \sqrt{ld}$ based on the assumption $\|{\vx}^{ln(i)}_{l}\|_2 = \sqrt{d}$ and the random initialization of $\vw_l$ for all $i\in [n]$. Hence $\frac{\partial {\vx}^{ln}_{l}}{\partial {\vx}^{a}_l} \approx 1/\sqrt{l}\mI$. %We utilize the independence assumption between $\mW_l$ and $(\mW_l)^T$ and take the expectation of $\frac{\partial \overrightarrow{\vx}_{l}^{ln}}{\partial \overrightarrow{\vx}^{a}_l}$ conditioned on $(\mW_l)^T$, which is denoted as $\mathbb{E}_T$.
Therefore, with high probability, we have
$$ \left\|\prod_{l=k+1}^{N}\left(\mI+\frac{\partial \overrightarrow{\vx}^f_{l}}{\partial \overrightarrow{\vx}^{ln}_l}\frac{\partial \overrightarrow{\vx}^{ln}_{l}}{\partial \overrightarrow{\vx}^{a}_l}\right)\right\|_2 \approx  \left\|\prod_{l=k+1}^{N} (\mI+\frac{1}{\sqrt{l}} \vw_l^T)\right\|_2 = \mathcal{O}(\log (N-k)),$$
where the last inequality is based on the argument for the product of random matrices \citep{zhang2022stabilize}. %, we have $$\left\|\prod_{l=k}^{N} \left(\mI+\mathcal{O}(1/\sqrt{l})\cdot\mW_l^T\right)\right\|_2 \le \mathcal{O}(\log (N-k)).$$
Therefore, by further  being aware of $\|\frac{\partial \vy}{\partial \overrightarrow{\vx}^{a}_{N+1}}\|_2\approx 1/\sqrt{N}$, we have $\|\frac{\partial {\mathcal{L}}}{\partial \vw_k}\|\approx \mathcal{O}(\frac{\log (N-k)}{\sqrt{N}})$,  which scales with $\sqrt{N}$ inverse proportionally.

\end{proof}

\section{Study of Adam} 
\label{app:adam}
The Adam update formula is
% \begin{align}
%     \rvw_t \gets \rvw_{t-1} - \alpha \cdot \hat{\rvm}_t/(\sqrt{\hat{\rvv}_t}+\epsilon) \label{eq:adaptive} \\
%     \hat{\rvm}_t \gets \rvm_t/(1-\beta_1^t), \quad \hat{\rvv}_t \gets \rvv_t/(1-\beta_2^t) \\
%     \rvm_t \gets \beta_1\cdot \rvm_{t-1} + (1-\beta_1)\cdot \rvg_t \\
%     \rvv_t \gets \beta_2\cdot \rvv_{t-1} + (1-\beta_2)\cdot \rvg_t^2,
% \end{align}

\begin{align*}
    \rvw^{(t)} \gets \rvw^{(t-1)} - \alpha \cdot \hat{\rvm}^{(t)}/(\sqrt{\hat{\rvv}^{(t)}}+\epsilon)  \\
    \hat{\rvm}^{(t)} \gets \rvm^{(t)}/(1-\beta_1^t), \quad \hat{\rvv}^{(t)} \gets \rvv^{(t)}/(1-\beta_2^t) \\
    \rvm^{(t)} \gets \beta_1\cdot \rvm^{(t-1)} + (1-\beta_1)\cdot \rvg \\
    \rvv^{(t)} \gets \beta_2\cdot \rvv^{(t-1)} + (1-\beta_2)\cdot \rvg^{2},
\end{align*}

The full expression of $u'(g_{t,i})$ is
\begin{align} \label{app:eq:full}
% u'(g_{t,i}) &= \frac{\alpha g_{t,i} \sqrt{\frac{\beta_{2} v_{t-1,i} + g_{t,i}^{2} \cdot \left(1 - \beta_{2}\right)}{1 - \beta_{2}^{t}}} \left(1 - \beta_{2}\right) \left(\beta_{1} m_{t-1,i} + g_{t,i} \left(1 - \beta_{1}\right)\right)}{\left(1 - \beta_{1}^{t}\right) \left(\epsilon + \sqrt{\frac{\beta_{2} v_{t-1,i} + g_{t,i}^{2} \cdot \left(1 - \beta_{2}\right)}{1 - \beta_{2}^{t}}}\right)^{2} \left(\beta_{2} v_{t-1,i} + g_{t,i}^{2} \cdot \left(1 - \beta_{2}\right)\right)} \nonumber  \\ 
% &+ \frac{\alpha \left(1 - \beta_{1}\right)}{\left(1 - \beta_{1}^{t}\right) \left(\epsilon + \sqrt{\frac{\beta_{2} v_{t-1,i} + g_{t,i}^{2} \cdot \left(1 - \beta_{2}\right)}{1 - \beta_{2}^{t}}}\right)}
% u'(g) &= \frac{\alpha g \sqrt{\frac{\beta_{2} v_{t-1} + g^{2} \cdot \left(1 - \beta_{2}\right)}{1 - \beta_{2}^{t}}} \left(1 - \beta_{2}\right) \left(\beta_{1} m_{t-1} + g \left(1 - \beta_{1}\right)\right)}{\left(1 - \beta_{1}^{t}\right) \left(\epsilon + \sqrt{\frac{\beta_{2} v_{t-1} + g^{2} \cdot \left(1 - \beta_{2}\right)}{1 - \beta_{2}^{t}}}\right)^{2} \left(\beta_{2} v_{t-1} + g^{2} \cdot \left(1 - \beta_{2}\right)\right)} \nonumber  \\ 
% &+ \frac{\alpha \left(1 - \beta_{1}\right)}{\left(1 - \beta_{1}^{t}\right) \left(\epsilon + \sqrt{\frac{\beta_{2} v_{t-1} + g^{2} \cdot \left(1 - \beta_{2}\right)}{1 - \beta_{2}^{t}}}\right)}
\frac{\partial u}{\partial g} &= \frac{\alpha g \sqrt{\frac{\beta_{2} v^{(t-1)} + g^{2} \cdot \left(1 - \beta_{2}\right)}{1 - \beta_{2}^{t}}} \left(1 - \beta_{2}\right) \left(\beta_{1} m^{(t-1)} + g \left(1 - \beta_{1}\right)\right)}{\left(1 - \beta_{1}^{t}\right) \left(\epsilon + \sqrt{\frac{\beta_{2} v^{(t-1)} + g^{2} \cdot \left(1 - \beta_{2}\right)}{1 - \beta_{2}^{t}}}\right)^{2} \left(\beta_{2} v^{(t-1)} + g^{2} \cdot \left(1 - \beta_{2}\right)\right)} \nonumber  \\ 
&+ \frac{\alpha \left(1 - \beta_{1}\right)}{\left(1 - \beta_{1}^{t}\right) \left(\epsilon + \sqrt{\frac{\beta_{2} v^{(t-1)} + g^{2} \cdot \left(1 - \beta_{2}\right)}{1 - \beta_{2}^{t}}}\right)}
\end{align}

When the gradient $g = 0$, we have
\begin{align*}
    \frac{\partial u}{\partial g} &= \frac{\alpha \left(1 - \beta_{1}\right)}{\left(1 - \beta_{1}^{t}\right) \left(\epsilon + \sqrt {\frac{\beta_{2} v_{t-1,i}}{1 - \beta_{2}^{t}}}\right)}
\end{align*}

%\section{Adam Simulation}
%\label{app:adam}

\begin{figure}[!htbp]
    \centering
    \includegraphics[width=0.9\linewidth]{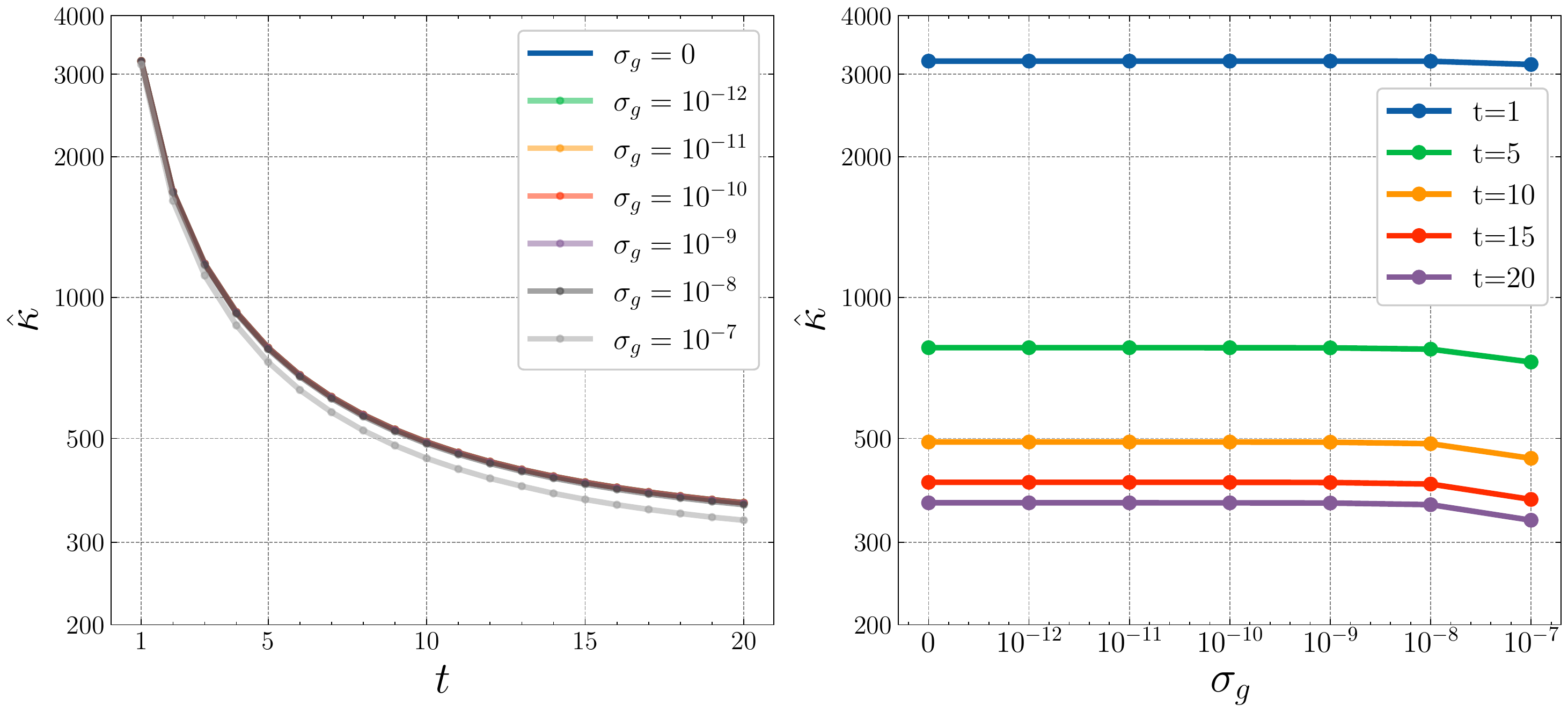}
    \caption{The absolute condition number $\hat{\kappa}$ w.r.t $t$ (left) and $\sigma_g$ (right).}
    \label{fig:adam}
\end{figure}

To simulate the Adam update and compute the $\hat{\kappa}$, we use the parameter as $d=1024, \epsilon=10^{-6}, \beta_1=0.9, \beta_2=0.98, \alpha=10^{-4}$.
Then for each step, we random first sample $\rvg \sim \gaussian{0}{\sigma^2_g\mI}$, can compute the $\hat{\kappa}$ based on full Equation~(\ref{app:eq:full}).
Finally, we update the Adam momentum with its update rules.
In Figure~\ref{fig:adam}, we show the simulated results by sampling $\rvg \sim \gaussian{0}{\sigma^2_g\mI}$ where $\sigma_g$ ranges from $0$ to $10^{-7}$.
In the left plot, we show how $\hat{\kappa}$ change w.r.t $t$ for different $\sigma_g$.
We can find that our estimation of $\hat{\kappa}$ is accurate as most of lines are overlaped.
Besides, it also show that even after 20 steps update, the $\hat{\kappa}$ is still greater than 300.
As many lines overlapped in the left plot, on the right side, we show a zoomed-in view by selecting five timestamp ans show the $\hat{\kappa}$ w.r.t to $\sigma_g$.
It is clear that $\hat{\kappa}$ is large when $\sigma_g$ is small.

\section{Proof of Corollary~\ref{coro:output} }
\label{app:prof_output}
Given two Pre-LN Transformer with $N-1$ and $N$ layers, we denote their output as $\vy_{N-1}$ and $\vy_{N}$, respectively.
Then we have 
\begin{align*}
    \vy_{N-1} &= \LN(\unx[N-1] + \fx[N-1]) = \nx[N] \\
    \vy_N &= \LN(\unx[N] + \fx[N]) = \nx[N+1] \\
    |\vy_{N} - \vy_{N-1}| &= |\nx[N+1] - \nx[N]|
\end{align*}
From Corollary~\ref{coro:diff_order}, we can approve that 
$\EXP[\vert(\vy_{N} - \vy_{N-1})_i\vert]\sim O(\frac{1}{\sqrt{N}})$.

\section{Proof of Corollary~\ref{coro:our_output}}
\label{app:prof_our_output}
\begin{proof}
    \begin{align*}
        \vy_{N} - \vy_{N-1} &= \left(\nx[N] + \LN\left(\dx_{N+1}\right)\right) - \left(\nx[N-1] + \LN\left(\dx_{N}\right)\right) \\
        &= \left(\nx[N] - \nx[N-1]\right) + \left(\LN\left(\dx_{N+1}\right) - \LN\left(\dx_{N}\right)\right)
    \end{align*}
    The $\LN\left(\dx_{N+1}\right) - \LN\left(\dx_{N}\right) $ is also a zero-mean Gaussian distribution, which can be denoted as $\gaussian{0}{\hat{\omega}_N^2}$.
    Then we have $\vy_{N} - \vy_{N-1} \sim \gaussian{0}{\omega^2 +\hat{\omega}_N^2}$. Therefore,
    \begin{align*}
        \EXP[\vert(\vy_{N} - \vy_{N-1})_i\vert] &= \sqrt{\frac{2}{\pi}}\sqrt{\omega^2 +\hat{\omega}_N^2} \\
        &\ge \sqrt{\frac{2}{\pi}} \omega
    \end{align*}
\end{proof}

\section{Full Results on WMT Dataset}\label{app:wmt}
The full results on WMT dataset is in Table~\ref{tab:wmt_e6d6} and Table~\ref{tab:wmt_e18d18}.
\begin{table*}[!htbp]
    \centering
    \begin{tabular}{llllllll|l}
    \toprule
        \textbf{Method}  & 2010 & 2011 & 2012 & 2013 & 2014 & 2015 & 2016 & \textbf{Average} \\ \midrule
        Pre-LN  & 24.03 & 21.77 & 22.08 & 25.63 & 26.27 & 29.07 & 33.84 & 26.10 \\ 
        Post-LN  & 24.27 & 22.06 & 22.43 & 26.11 & 27.13 & 29.70 & 34.40 & 26.59 \\ 
        DLCL & 23.94 & 22.00 & 22.24 & 26.11 & 27.37 & 29.71 & 34.26 & 26.52 \\ 
        T-Fixup  & 24.09 & 21.98 & 22.04 & 25.96 & 26.92 & 29.45 & 34.56 & 26.43 \\ 
        DeepNet  & 24.08 & 21.76 & 22.09 & 25.90 & 26.85 & 29.62 & 34.39 & 26.38 \\ 
        Admin & 24.32 & 21.79 & 22.17 & 26.26 & 27.14 & 29.61 & 34.12 & 26.49 \\ 
        B2T & 24.12 & 21.93 & 22.29 & 26.31 & 26.84 & 29.48 & \textbf{34.73} & 26.53 \\ \midrule
        \textbf{\ourM{}(Ours)} &  \textbf{24.42} &	\textbf{22.20} & \textbf{22.66}&\textbf{26.64}	& \textbf{27.23}&	\textbf{30.22} &	34.55&	\textbf{26.85} \\
        \bottomrule
    \end{tabular}
    \caption{Experimental Results on WMT with E6D6 models.}
    \label{tab:wmt_e6d6}
\end{table*}

\begin{table*}[!htbp]
    \centering
    \begin{tabular}{llllllll|l}
    \toprule
        \textbf{Method} & 2010 & 2011 & 2012 & 2013 & 2014 & 2015 & 2016 & \textbf{Average} \\ \midrule
        Pre-LN  & 24.07 & 21.98 & 22.4 & 26.28 & 27.36 & 29.74 & 34.16 & 26.57 \\
        Post-LN &\multicolumn{7}{c}{Fail}   \\
        DLCL &24.20 & 22.51 & 22.83 & 26.59 & 27.97 & 30.24 & 33.98 & 26.90 \\ 
        T-Fixup & 24.45 & 22.29 & 22.76 & 26.57 & 27.71 & 30.13 & 34.69 & 26.94 \\ 
         DeepNet  &24.70 & 22.40 & 22.92 & 26.85 & 28.21 & 30.60 & 34.25 & 27.13 \\ 
        Admin  &24.56 & 22.17 & 22.62 & 26.48 & 27.99 & 30.35 & 33.88 & 26.86 \\ 
        B2T  &24.62 & 22.51 & 22.86 & 26.74 & 28.48 & 30.99 & 34.93 & 27.30 \\  \midrule
        \textbf{\ourM{}(Ours)} & \textbf{24.85} & \textbf{22.76} & \textbf{23.18} & \textbf{27.60} & \textbf{28.79} & \textbf{31.12} & \textbf{35.24} & \textbf{27.65} \\
        \bottomrule
    \end{tabular}
    \caption{Experimental Results on WMT E18D18 models.}
    \label{tab:wmt_e18d18}
\end{table*}

\section{Full Results on Learning-Rate Warm-Up} \label{app:warmup}

The full results on learning-rate warm-up is in Table~\ref{tab:iwslt_nowarm_full}.

\begin{table}[!ht]
    \centering
    \begin{tabular}{ccccc}
    \toprule
    \multirow{2}{*}{\textbf{Method}}  & \multicolumn{2}{c}{\textbf{Learing-Rate Scheduler}} & \multirow{2}{*}{\textbf{E6D6}} & \multirow{2}{*}{\textbf{E12D12}} \\ \cmidrule{2-3}
     & Warm-up & Decay Formula & & \\
    \midrule
    Post-LN & Yes & Inverse Square Root & 35.37 & Fail \\
    Post-LN & No & Inverse Square Root & Fail & Fail \\
    Post-LN & No & Linear & Fail & Fail \\
    \midrule
    Pre-LN & Yes & Inverse Square Root & 35.12 & 35.18 \\
    Pre-LN & No & Inverse Square Root & 32.28 & 31.82 \\
    Pre-LN & No & Linear & 32.26 & 31.85 \\
    \midrule
    \ourM (Ours) & Yes & Inverse Square Root & 35.63 & \textbf{36.09} \\
    \ourM (Ours) & No & Inverse Square Root & 35.76 & 35.57 \\
    \ourM (Ours) & No & Linear & \textbf{35.96} & 35.72 \\
    \bottomrule
    \end{tabular}
    \caption{Experimental Results on IWSLT with different learning-rate scheduler.}
    \label{tab:iwslt_nowarm_full}
\end{table}

\section{Implementation Details}
\label{app:impl}
\subsection{Data processing}
The data processing scripts are
\begin{itemize}
    \item IWSLT: \url{https://github.com/facebookresearch/fairseq/blob/main/examples/translation/prepare-iwslt14.sh}
    \item WMT: \url{https://github.com/facebookresearch/fairseq/blob/main/examples/translation/prepare-wmt14en2de.sh}
    \item OPUS-100: \url{https://github.com/bzhangGo/zero}
\end{itemize}

\subsection{Hyper-parameters}
The training hyper-parameters are in Table~\ref{tab:app:hypa_iwslt},~\ref{tab:app:hypa_wmt}, and~\ref{tab:app:hypa_opus}.

\begin{table}[ht]
    \centering
    \begin{tabular}{c|c}
    \toprule
    Parameter     &  Value \\ \midrule
    Dropout & 0.3 \\
    Embedding dim & 256 \\
    FFN dim & 1024 \\
    Attention heads & 4 \\
    Encoder layers & 6/12 \\
    Decoder layers & 6/12 \\
    Learning rate & $5*10^{-4}$ \\
    Learning rate scheduler & inverse sqrt \\
    Warm-up steps & 4000 \\
    Label smoothing & 0.1 \\
    Weight decay & 0.0001 \\
    Gradient clipping & 0 \\
    Adam $\beta$ & 0.9, 0.98 \\
    Max update steps & 300k \\
    \bottomrule
    \end{tabular}
    \caption{Hyper-parameters of IWSLT training}
    \label{tab:app:hypa_iwslt}
\end{table}

\begin{table}[ht]
    \centering
    \begin{tabular}{c|c}
    \toprule
    Parameter     &  Value \\ \midrule
    Dropout & 0.3 \\
    Embedding dim & 512 \\
    FFN dim & 2048 \\
    Attention heads & 8 \\
    Encoder layers & 6/18 \\
    Decoder layers & 6/18 \\
    Learning rate & $1*10^{-3}$ \\
    Learning rate scheduler & inverse sqrt \\
    Warm-up steps & 4000 \\
    Label smoothing & 0.1 \\
    Weight decay & 0.0001 \\
    Gradient clipping & 0 \\
    Adam $\beta$ & 0.9, 0.98 \\
    Max update steps & 500k \\
    \bottomrule
    \end{tabular}
    \caption{Hyper-parameters of WMT training}
    \label{tab:app:hypa_wmt}
\end{table}

\begin{table}[ht]
    \centering
    \begin{tabular}{c|c}
    \toprule
    Parameter     &  Value \\ \midrule
    Dropout & 0.1 \\
    Embedding dim & 512 \\
    FFN dim & 2048 \\
    Attention heads & 8 \\
    Encoder layers & 18 \\
    Decoder layers & 18 \\
    Learning rate & $1*10^{-3}$ \\
    Learning rate scheduler & inverse sqrt \\
    Warm-up steps & 4000 \\
    Label smoothing & 0.1 \\
    Weight decay & 0.0001 \\
    Gradient clipping & 0 \\
    Adam $\beta$ & 0.9, 0.98 \\
    Max update steps & 100k \\
    \bottomrule
    \end{tabular}
    \caption{Hyper-parameters of OPUS-100 training}
    \label{tab:app:hypa_opus}
\end{table}

\subsection{Implementation trick on FP16 training}
In \ourM{}, sometimes the $\dx_k$ will exceed the value range that can be expressed by FP16 and may cause training error.
When this happens, a simple numeric trick is to downscale $\dx_k$ to make is within the FP16 scope. 
This will not affect the results because $\LN(\dx_{N+1}) = \LN(\eta \cdot \dx_{N+1})$ for any $\eta > 0$.
We did not observe such an issue in FP32 training.

\end{document}